\documentclass[wcp]{jmlr}


\usepackage{longtable}

\usepackage{booktabs}

\usepackage{bm}
\usepackage{makecell}
\usepackage[normalem]{ulem}

\usepackage{outlines}
\usepackage{multirow}
\usepackage{caption}
\usepackage{relsize}

\usepackage{tikz}
\usepackage{pgfplotstable}
\usepackage{amsfonts}
\usepackage{xifthen}
\usepackage{wrapfig}

\usepgfplotslibrary{groupplots}
\usetikzlibrary{shapes,fit,backgrounds,arrows,decorations.markings,arrows.meta,positioning,decorations.pathreplacing,calligraphy,shapes.misc,matrix}

\pgfdeclarelayer{background}
\pgfdeclarelayer{foreground}
\pgfsetlayers{background,main,foreground}
\pgfplotsset{
    line and fill/.style={
        legend image code/.code={%
          \draw [##1,fill=none, thick] (0mm,0mm) -- (4mm,0mm);
        },
    },
}

\newcommand{\ppw}[1]{$\text{PP}_{#1}^{\text{Obs}}$}
\newcommand{\ppwn}[1]{$\text{PP}_{#1}^{\text{Obs, Ns}}$}
\newcommand{\ppwna}[1]{$\text{PP}_{#1}^{\text{Obs, Ns, Att}}$}
\newcommand{\lj}[3]{$\text{LJ}_{#1}^{#2,#3}$}
\newcommand*{\eg}{\emph{e.g.}}
\newcommand*{\ie}{\emph{i.e.}}

\newcommand*{\proposedre}{{\coolname}\textsubscript{on}}
\newcommand*{\proposedac}{{\coolname}\textsubscript{off}}
\newcommand{\std}[1]{\footnotesize{\color{black}$\pm$#1}}
\newcommand{\errorband}[6][]{
    \addplot [draw=none, stack plots=y, forget plot] 
      table [x={#3},y expr=\thisrow{#4}-\thisrow{#5}] {#2};
    \addplot [fill=gray!40, stack plots=y, opacity=0.16, #1, draw=none, forget plot] 
      table [x={#3},y expr=2*\thisrow{#5}] {#2} \closedcycle;
    \addplot [stack plots=y, draw=none, forget plot] 
      table [x={#3},y expr=-(\thisrow{#4}+\thisrow{#5})] {#2};
    \addplot [forget plot, thick, #1, fill=none] 
      table [x={#3},y expr=\thisrow{#4}] {#2};
    \addlegendimage{line and fill,#1}
    \addlegendentry{#6}
}
\newcommand{\parag}[1]{\textbf{#1}\hspace{0.3cm}}
\newcommand{\subparag}[2]{
    \ifthenelse{\isempty{#2}}
        {\textit{#1}\hspace{.2cm}}
        {(#2) \textit{#1}\hspace{.2cm}}
}

\newcommand{\coolname}{{\fontfamily{lmtt}\selectfont LAUREL}}

\newcommand{\paren}[1]{\left( #1 \right)}
\newcommand{\func}[2][]{
    {\text{\fontfamily{lmtt}\selectfont #1}\left(#2\right)}}
\newcommand{\R}[1][]{\mathbb{R}^{#1}}

\newcommand{\attset}{\Delta}
\newcommand{\att}{\delta}

\pgfplotstableread[col sep=comma]{
0.0,40.0,0.0,40.0,0.0
40000.0,39.95156,0.09687999999999876,39.91406,0.10528421724076313
80000.0,39.79376,0.16316122823759305,39.91406,0.16800716175211108
120000.0,39.77656,0.14171182872293994,39.5875,0.25182901341982084
160000.0,38.62814,0.8646382147464923,39.31562,0.5541002992238858
200000.0,38.829699999999995,0.4296204278197213,38.843740000000004,0.4544916043228951
240000.0,38.80466,0.45782357125862355,38.742180000000005,0.25670242227139156
280000.0,38.959379999999996,0.5333649234811009,38.149980000000006,0.3577284355485322
320000.0,37.310959999999994,0.9092813483185492,37.818760000000005,0.815869236091175
360000.0,37.05624,0.7574045381432571,37.61248,0.883235976169448
400000.0,36.19534,1.123759055313904,37.5953,1.0637240563228791
440000.0,36.74218,0.9769602497543084,36.781220000000005,1.3599940402810602
480000.0,35.44376,0.6590425649379551,36.45624,0.26218043100124844
520000.0,35.28752,1.9444043976498302,35.10938,1.0274376504683882
560000.0,33.445319999999995,1.3497894078707249,35.23906,0.7658466676822465
600000.0,33.781240000000004,1.6427902381010189,35.92342,0.7218701140787014
640000.0,33.89532,1.615513997958545,35.05782,1.5587492709220445
680000.0,32.415620000000004,0.4217670892803284,34.5047,0.7412388548909198
720000.0,31.34686,1.8061872655956814,34.1625,1.2451807884801314
760000.0,31.545299999999997,1.5516874259979028,34.09846,1.434144788506376
800000.0,30.735940000000006,1.4260263414116865,34.21718,1.5387858244733088
840000.0,30.507800000000003,2.73113493039066,32.07968,1.3645340412023446
880000.0,30.3375,1.3278418083491723,32.54218,0.4606297576145061
920000.0,29.565620000000003,2.28296337894413,31.93282,0.894980246485921
960000.0,28.448439999999998,1.9652020075300143,30.931220000000003,0.853665886398187
1000000.0,28.078120000000002,1.2332626636690174,31.190639999999995,1.1731644839492885
1040000.0,27.296879999999998,1.1616103622127352,30.81096,1.0861661504576545
1080000.0,25.868759999999998,0.9810898401267856,31.11404,1.4915461569793946
1120000.0,25.13124,1.689044978205139,30.837500000000006,1.1889867669574807
1160000.0,25.0203,1.7342750312450443,29.66406,1.3780526486313942
1200000.0,23.87656,1.2033925387835838,28.9375,0.66942874452775
1240000.0,23.94842,1.1665470353140504,29.27812,0.6372065266457974
1280000.0,23.3594,0.8601975610288604,29.879700000000003,0.5755541156138143
1320000.0,22.87342,0.7914075824756792,29.020339999999997,0.6055687990641526
1360000.0,21.776559999999996,0.8260173281475404,29.54842,0.773312866568248
1400000.0,21.829659999999997,0.4265210925616693,27.60626,0.9756211285124983
1440000.0,21.8672,0.8807864031648083,28.082819999999998,1.7231819699613853
1480000.0,21.69844,1.0958030308408542,27.67502,1.3895923667032717
1520000.0,21.14532,0.41633896478710647,27.56094,1.2406928541746345
1560000.0,20.97656,0.7244183283158979,27.518759999999997,2.074775716649874
1600000.0,21.087500000000002,0.868156969677719,27.107840000000003,1.1133705198180883
1640000.0,20.63282,0.5788149942771011,27.696859999999997,1.4742552704331777
1680000.0,20.73594,0.3434005509605362,27.66096,1.848329805635347
1720000.0,20.30312,0.4122994125632482,27.6031,2.800051610952913
1760000.0,20.25626,0.5159866843243146,26.506259999999997,2.241899670904119
1800000.0,20.017180000000003,0.7856509693241656,26.41562,1.6351976827283
1840000.0,20.2594,0.6882086689369726,26.514059999999994,1.8828705729284743
1880000.0,19.803140000000003,0.3587140231437853,26.531240000000004,2.099029081837601
1920000.0,19.91562,0.5100360512748092,26.665640000000003,1.9404588576932007
1960000.0,19.96252,0.3686757024811914,26.070299999999996,2.9178365608786248
2000000.0,19.53282,0.3181152646447506,26.31248,2.5677284921891568
2040000.0,20.32814,0.9394887452226351,25.51408,1.9806442430684013
2080000.0,19.671879999999998,0.642977540509776,25.3875,2.550890469620364
2120000.0,19.29218,0.29580057065529713,25.81094,2.5404790104230344
2160000.0,19.51248,0.6593391521819404,25.35314,2.1278274371762382
2200000.0,19.203120000000002,0.3074192602944715,24.3453,2.065951884241257
2240000.0,19.7,0.5155495786052009,24.55312,2.3529930874526594
2280000.0,19.29844,0.5407850131059472,24.31716,2.9223971164781832
2320000.0,19.94062,0.5596436094515859,25.31566,2.626033207406182
2360000.0,19.1125,0.8148507347974837,24.44688,2.248256760603646
2400000.0,19.595299999999998,0.3587086227009326,23.882799999999996,2.2946914537688947
2440000.0,19.445320000000002,0.412946681304015,23.64064,2.1559713046327866
2480000.0,19.05936,1.193079599356221,24.390639999999998,2.7355368603621484
2520000.0,19.09062,0.7209798621320852,23.64846,2.0461531649414715
2560000.0,18.90312,0.4594472827213148,23.484379999999998,2.4377561645086656
2600000.0,18.781240000000004,0.4828187096623331,23.6125,2.459696087731165
2640000.0,18.41252,0.2524219118856371,23.64844,2.56867578226603
2680000.0,18.86562,0.5918415916442513,23.510939999999998,2.3802386683692034
2720000.0,18.91718,0.6558594739728933,23.740640000000003,2.8135679445145807
2760000.0,19.085919999999998,0.8748736077857194,24.00156,3.3167075177651704
2800000.0,19.10782,0.4373684003217424,23.2297,2.578981001868762
2840000.0,19.30468,0.4282154569839811,23.040660000000003,3.0533363297219647
2880000.0,19.010920000000002,0.9427185760342268,23.28282,2.3564796756178477
2920000.0,18.782819999999997,0.6782473513402024,22.7328,2.2301167816955236
2960000.0,18.70626,0.9991030629519659,23.69218,3.364004849223616
3000000.0,18.639059999999997,0.5343507971361136,23.240640000000003,3.823204604307753
3040000.0,19.02812,0.5653059256721097,23.393760000000004,3.1647022928547317
3080000.0,18.81094,0.3799433147194465,22.36408,2.613155747673682
3120000.0,18.59374,0.25454325054889965,23.571859999999997,2.7904768593199267
3160000.0,18.63124,0.6609563150466146,22.38438,3.016018449810942
3200000.0,18.79376,0.20577640875474487,22.787520000000004,3.2954159484957284
3240000.0,18.4375,0.4236990205322634,22.72344,3.4447381036009106
3280000.0,18.6422,0.4839833096295781,22.959380000000003,2.8039052526075117
3320000.0,18.49844,0.6387068610873072,22.59842,2.9004524311906925
3360000.0,18.20626,0.7658374046754315,22.690640000000002,4.033215667231298
3400000.0,18.529680000000003,0.5998981493553712,22.21874,3.4020553990786224
3440000.0,18.545319999999997,0.6873030289472035,22.45468,3.104293420667576
3480000.0,18.53124,0.316271184903083,22.114060000000002,3.570112894349421
3520000.0,18.54218,0.2974981102460983,22.751560000000005,3.715978485190677
3560000.0,18.33594,0.4484561632980424,22.450000000000003,3.2513062039740275
3600000.0,18.91094,0.8151657809304805,22.489060000000002,3.163892422064947
3640000.0,18.910919999999997,0.49060044598430524,22.10312,3.259150012748722
3680000.0,18.1578,0.5815671517546362,22.598419999999997,4.436992049305475
3720000.0,18.05626,0.5969382232693765,22.12344,3.6965445500358842
3760000.0,18.935899999999997,0.6149039242027976,22.517200000000003,3.330350159968167
3800000.0,18.40624,0.6677361772436775,22.132820000000002,2.982218357129472
3840000.0,18.828120000000002,0.47808696865737815,22.4875,4.256578268515687
3880000.0,18.71562,0.5457531068166266,22.298440000000003,3.7711884877847197
3920000.0,19.10938,0.4692482771412167,22.457820000000005,2.8593988811636613
3960000.0,18.29842,0.5875508093773676,22.20938,4.3278801348466205
4000000.0,18.732799999999997,0.6035989396942308,22.17344,2.680349898502059
}\dataAblationArchA
\pgfplotstableread[col sep=comma]{
0,0,0,0,0
}\dataAblationArchB
\pgfplotstableread[col sep=comma]{

8000.0,40.670925,0.06113053144706004,40.76526,0.09613296208897339
1608000.0,39.511250000000004,0.053749999999998875,40.13974,0.5992720971311779
3208000.0,39.436575000000005,0.2489985680179698,39.97098,0.9751651991329475
4808000.0,39.465325,0.12289046698177863,40.05748000000001,1.1719897190675346
6408000.0,39.586575,0.13416980612268772,39.46674,1.9716082040811238
8008000.0,39.4156,0.20196283073872817,38.93052,1.5756114386485018
9608000.0,39.816275,0.3955150526528689,38.75726,1.8439991611711757
11208000.0,39.41437500000001,0.19367954686801625,37.27274,2.4041828770707085
12808000.0,39.7506,0.5766335101257997,36.43876,2.8797273875143117
14408000.0,39.868425,0.4386707955574428,35.30924,3.4613916869374957
16008000.0,39.286225,0.9506295292462799,35.6665,4.462902267807352
17608000.0,39.41035,0.6042125722789932,34.522980000000004,4.116422927931482
19208000.0,38.949400000000004,0.9437037167458864,33.34778,4.93210798600355
20808000.0,38.973150000000004,0.6718050665929821,32.4865,3.8632661896379847
22408000.0,38.888725,0.6031570084770619,30.942500000000003,4.36004634746008
24008000.0,38.8447,1.060909524888905,29.66452,2.8513294347724902
25608000.0,38.935649999999995,2.1956204743762067,28.86676,1.9579188099612301
27208000.0,38.68905,1.042302764315629,27.427000000000003,2.022960330802362
28808000.0,38.106575,1.3049614714140036,28.338480000000004,2.1929876747487653
30408000.0,37.275025,2.2055741104925497,26.104000000000003,1.0438759868873313
32008000.0,36.5569,3.4169747562719857,25.844279999999998,1.0892504971768426
33608000.0,37.368424999999995,2.591981609092743,25.53824,1.4627601534086165
35208000.0,36.308425,3.841166481926421,25.4105,1.4068615610642008
36808000.0,36.436875,3.858643594823835,24.90128,0.8427855751019954
38408000.0,36.145325,3.4030641989058927,25.077499999999997,1.6311123125033415
40008000.0,35.452825,4.181292896566205,24.754260000000002,0.9477385010645082
41608000.0,34.551525,4.959640001237487,24.156259999999996,1.1770100366606906
43208000.0,36.345,4.529298717792853,24.252000000000002,1.2746371452299665
44808000.0,34.393125,4.993965785513053,23.8195,0.8550015649108473
46408000.0,34.032174999999995,5.561226745231217,23.954,1.1787187433819835
48008000.0,33.34155,6.151527926661797,23.471,0.6295879414347136
49608000.0,33.005,6.436668543586815,23.27524,0.503658148350644
51208000.0,33.2128,6.400097137153466,23.33474,0.9523827331488117
52808000.0,33.19685,6.459780350174454,23.078480000000003,1.1646241735426923
54408000.0,32.5428,7.250214970261227,23.27202,0.6682592472985318
56008000.0,32.3947,7.199642770304651,22.92226,1.0201139183444168
57608000.0,33.114025,6.496995378009362,22.735000000000003,0.7673428177809444
59208000.0,31.869675,7.469286932959197,22.95176,0.6917685786446216
60808000.0,31.9212,7.541159766574369,22.62378,0.9667098456103562
62408000.0,31.745599999999996,7.694442490200312,22.436,0.8169009585010902
64008000.0,31.632524999999998,7.685700889110568,22.675500000000003,0.7172350047229984
65608000.0,31.819999999999997,7.473169681935503,22.567500000000003,0.8155965767461266
67208000.0,31.5744,8.072906949172648,22.50726,0.8376147720760427
68808000.0,31.53595,8.09875912887277,22.63824,0.9694818546006938
70408000.0,31.93655,7.730257213217423,22.72674,1.4895200859337212
72008000.0,31.359075,8.005210789659133,22.43324,0.9879958837970929
73608000.0,31.461550000000003,8.475084945444499,22.08,0.8058916651759097
75208000.0,31.2428,8.231580960848285,22.14976,0.7418645633806747
76808000.0,31.257175000000004,8.361562729949169,22.022,0.9481776078351571
78408000.0,30.98155,8.551098384564405,21.775,0.774366112378376
80008000.0,31.788724999999996,7.957415873063,22.03648,1.1904049149764127
81608000.0,31.328749999999996,8.32294401233722,22.06148,0.6659908870247391
83208000.0,31.12595,8.39834623914137,21.88148,0.7560997034783173
84808000.0,30.873425,8.519225423527365,22.248,0.883614990818966
86408000.0,31.2072,8.375482144032068,21.79948,0.6263712011259774
88008000.0,30.9553,8.450324534596291,22.15774,0.7880546341466432
89608000.0,31.1625,8.392418600737216,21.797739999999997,0.6903648357209401
91208000.0,30.981875,8.554777598913661,21.65326,0.7895317513564599
92808000.0,30.839350000000003,8.504042637034459,22.003500000000003,0.806147242133843
94408000.0,30.865000000000002,8.242603356646493,21.90826,0.783616798186461
96008000.0,31.010650000000002,8.304187914691,21.81326,0.5924175979830443
97608000.0,30.6644,8.644698576873575,21.788220000000003,0.6402562155887281
99208000.0,30.927199999999996,8.823334383610314,21.771759999999997,0.7597580314810762
100808000.0,31.018725000000003,8.538226332610012,21.775760000000002,0.8786665957005529
102408000.0,30.932499999999997,8.443373798133067,21.834780000000002,0.7831094212177507
104008000.0,30.84815,8.719886878423367,21.864259999999998,0.5468218781285185
105608000.0,30.7641,9.02451739125146,21.73752,0.739654598850031
107208000.0,30.814725,8.555774399017016,21.40102,0.7441399637165046
108808000.0,30.805,8.816025101767803,21.51376,0.4304331845943112
110408000.0,30.575650000000003,9.16595718119499,21.58102,0.7077918914483262
112008000.0,30.512175,9.167229306713944,21.55248,0.8264774053777879
113608000.0,30.62,8.85631563320775,21.28174,0.39871168831625653
115208000.0,30.48875,8.938812928040278,21.63074,1.084729397776238
116808000.0,30.437825,9.078494157175793,21.44198,0.855997419154987
118408000.0,30.251575,9.203544049271185,21.19726,0.749373825537028
120008000.0,30.366549999999997,9.082757804351054,21.282,0.7939666718446061
121608000.0,30.31155,9.337536175699672,21.387539999999998,0.822920075341464
123208000.0,30.424725,9.127621243614076,21.12038,0.7190771012902579
124808000.0,30.083099999999998,9.187074329458753,21.25976,0.7946416213614793
126408000.0,30.375024999999997,9.11181050378436,21.271,0.8747642402384767
128008000.0,30.415000000000003,8.968480492536068,21.28588,0.8206520490439291
129608000.0,30.33185,9.077572606291838,21.27478,0.39309002225953227
131208000.0,30.584675000000004,9.114346125031405,21.33348,0.7970972598121254
132808000.0,30.248425,9.28142477138478,21.228019999999997,0.5672750704904986
134408000.0,30.281899999999997,9.306851935805147,21.28676,0.738524468382734
136008000.0,30.626575,9.168704349136524,21.326019999999996,0.7208566165334129
137608000.0,30.356275,9.296519209191954,21.3115,0.7021476169581433
139208000.0,30.529075000000006,9.11692394351708,21.308999999999997,0.7252892360982619
140808000.0,30.2528,9.014045582866773,21.279239999999998,0.6300566723716214
142408000.0,30.284099999999995,9.165808455613721,21.355240000000002,0.5985290355529962
144008000.0,30.328125,9.207756880308851,21.35874,0.5598139426630961
145608000.0,30.212175000000002,8.976264502111944,21.23726,0.5955500670808468
147208000.0,30.383775,9.155894250802321,21.260119999999997,0.6944238141077831
148808000.0,30.410925,9.110133418994202,20.87886,0.7157674275908348
150408000.0,30.244699999999998,9.202409311968252,21.178620000000002,0.7140018862720184
152008000.0,30.259050000000002,9.315644164924937,21.136740000000003,0.7544094951682403
153608000.0,30.302825,9.215442247492792,21.196,0.7948035455381409
155208000.0,30.34,9.12376842209402,21.137999999999998,0.7114808866020222
156808000.0,30.118425000000002,9.294556616206876,21.24874,0.829276568100173
158408000.0,30.305325000000003,9.269296133842905,21.121019999999998,0.8480616496458264
}\dataAblationNoise
\pgfplotstableread[col sep=comma]{
8000.0,40.7665,0.09527066704920384,40.76974,0.08713680278733911
1592000.0,39.40652,0.25284514153924353,39.974000000000004,0.5895398001831605
3176000.0,39.37274,0.28121612044831473,40.15926,0.6813516466553844
4760000.0,39.490759999999995,0.25577643832065433,39.50224000000001,1.4548415537095436
6344000.0,39.756780000000006,0.4703005906864231,39.36326,1.3973704542461176
7928000.0,39.66302,0.6653329855042522,38.54174,2.0242786148156577
9512000.0,39.88374,0.31073513866313773,37.794,2.4991335050372943
11096000.0,39.7085,0.6029438215953463,37.30574,3.400582121696225
12680000.0,39.40574,0.19907006404781205,36.114,3.3669647684524393
14264000.0,39.407740000000004,0.44286951396545876,35.55574,2.018757970238135
15848000.0,39.34674,0.8856318142433663,35.6295,4.004442917060998
17432000.0,39.61376,0.43509220448084324,35.46476,3.883346933561306
19016000.0,39.393980000000006,0.49980252260267793,32.8805,3.4538768038249423
20600000.0,38.768739999999994,1.3296234994914922,31.52326,3.460967723975478
22184000.0,38.56924,1.7829382452569704,29.831259999999997,2.901694406101373
23768000.0,38.03654,2.0462646041995654,29.21148,2.4634512095026353
25352000.0,38.084720000000004,1.717225985594209,28.21524,2.0275188794188823
26936000.0,38.220240000000004,1.53017695264306,28.879,3.387671066086551
28520000.0,38.09776,1.8445040922697886,27.761020000000002,1.9507520758927837
30104000.0,38.141479999999994,1.7393558536423772,26.708,1.987002837441356
31688000.0,38.10752,1.726494606304924,26.509000000000004,1.4680557128392633
33272000.0,37.69802,2.33846280825674,26.05402,1.482748213824586
34856000.0,38.61376,2.1495646420612706,25.823,1.7227868922185352
36440000.0,37.80048,2.4608531336916477,25.38598,1.1292553039946283
38024000.0,37.55998,2.79475464497333,25.316,1.4878067750887547
39608000.0,37.593239999999994,3.0754472628221072,24.322499999999998,0.9509369463849844
41192000.0,38.08374,3.871324696069808,24.649520000000003,1.1700002758974035
42776000.0,37.2105,3.779412054275108,24.50926,1.1448950565008131
44360000.0,37.23026,3.639673923636567,24.544239999999995,1.213989780187626
45944000.0,36.662279999999996,3.8723608485780363,24.191,1.0956193116224264
47528000.0,36.63376,3.723867579600542,24.855240000000002,0.8837131923876655
49112000.0,36.186,4.141262066085653,23.496000000000002,1.2422782602943674
50696000.0,35.985240000000005,4.011681618274312,23.532980000000002,1.1643137230145497
52280000.0,35.963,4.068561152545208,23.21926,1.0069772045086223
53864000.0,36.132259999999995,4.717816457896598,24.30274,2.838141769961465
55448000.0,35.00926,4.579752991854474,23.61022,1.0638539211752713
57032000.0,35.9275,4.64464800345516,23.23702,1.1304576708572511
58616000.0,35.155240000000006,4.2758050856417675,23.090980000000002,1.0413147293685996
60200000.0,34.53148,4.806063241947613,22.999519999999997,1.1928352868690633
61784000.0,36.370259999999995,4.753249051585661,22.83396,1.0534324232716594
63368000.0,34.7205,4.681004654558676,22.188499999999998,0.9404739634886228
64952000.0,34.35224,5.2244274467543335,22.446,0.6242693136779989
66536000.0,34.4615,5.130386140633081,22.067479999999996,0.7950104411892968
68120000.0,33.416740000000004,5.693094816213762,21.82,0.5248951095218927
69704000.0,32.210480000000004,6.3318452085944115,21.711,0.6815645794787161
71288000.0,31.924,6.520144285213327,21.92652,0.4906469867430145
72872000.0,31.800739999999998,6.720880445774943,21.9495,0.5793931963701328
74456000.0,31.23448,7.207418088719428,21.753,0.6472369550636002
76040000.0,30.780520000000003,6.5184622424004255,21.66624,0.4819875790930721
77624000.0,30.704040000000003,6.6162896573835095,21.64748,0.621668662230935
79208000.0,30.69198,6.175118612755547,21.54222,0.7530748207183667
80792000.0,30.596719999999998,5.902830347011508,21.64902,0.45943752741803845
82376000.0,30.9005,6.899230245179529,21.34698,0.6362484432986844
83960000.0,30.61074,7.274318441641115,21.3865,0.5527279403106014
85544000.0,30.84074,6.848504865326445,21.33452,0.6200279200165095
87128000.0,31.28752,6.904673730568302,21.155,0.5952415476090366
88712000.0,29.78426,6.3560382603002,21.37972,0.4998762462850184
90296000.0,30.05378,6.496726177514333,21.08026,0.34319272486461566
91880000.0,30.64626,7.82500232756515,21.42048,0.6194594592061692
93464000.0,29.57448,6.5351519802985445,21.416019999999996,0.6704537162250646
95048000.0,29.28548,6.511280184848445,21.29878,0.6275610931216177
96632000.0,29.15874,6.72979699919693,21.370759999999997,0.5143154541718536
98216000.0,28.745240000000003,6.646952813613167,21.316,0.6363848489711241
99800000.0,29.275239999999997,7.30196821455695,21.38352,0.6211523046725331
101384000.0,28.746979999999997,6.7717337574361265,21.369500000000002,0.5964560972946787
102968000.0,28.806239999999995,6.848448170965448,21.2595,0.7287034046853352
104552000.0,28.710240000000006,6.719778858444673,21.301840000000002,0.6307738504408696
106136000.0,28.39624,6.504642497970199,21.40002,0.41154878398556827
107720000.0,28.351260000000003,6.390377314869601,21.32226,0.5390544558762131
109304000.0,27.86974,6.378187536157901,21.348779999999998,0.719370854010642
110888000.0,27.67826,6.639360736275746,21.337020000000003,0.5461895692889057
112472000.0,27.674500000000002,6.53645673710153,21.1875,0.6686583402605548
114056000.0,27.633259999999996,6.217316495916868,21.08498,0.6814229417916595
115640000.0,27.42526,6.2900172294199646,21.3695,0.5327673038015758
117224000.0,27.15724,6.506886787888658,21.393,0.7464507190699191
118808000.0,26.8705,6.4588656338400465,20.997120000000002,0.5040018388855333
120392000.0,26.862479999999998,6.672075794053901,20.987740000000002,0.634620442784504
121976000.0,26.726980000000005,6.540182014409081,20.92752,0.43130580983798517
123560000.0,26.87038,6.519604287500892,21.059019999999997,0.5767173845134204
125144000.0,26.743239999999997,6.3828440716031905,21.30574,0.39433560123326505
126728000.0,26.58876,6.534262791348386,21.036,0.652710560662228
128312000.0,27.06176,6.581377241155532,21.06322,0.36402904499503835
129896000.0,26.96126,6.4776789613564505,20.988500000000002,0.4309853547395784
131480000.0,26.55898,6.646135844353468,21.22826,0.43244694749761037
133064000.0,26.05036,6.920988694283499,21.075480000000002,0.6116793454090139
134648000.0,26.180380000000003,6.94135094758938,20.947499999999998,0.5330657107711956
136232000.0,25.910379999999996,6.829906562581951,21.194499999999998,0.4419372127350219
137816000.0,26.16126,7.16461518508845,20.87774,0.5406385173107814
139400000.0,25.83088,7.001157031919797,21.03076,0.5205810084895537
140984000.0,26.019479999999998,7.0155147410293415,21.06636,0.4484844594854989
142568000.0,25.93398,6.921849172121564,21.058,0.5951126582421188
144152000.0,25.602359999999997,7.005471239709716,20.75514,0.4519351617212368
145736000.0,25.58648,7.135614401409314,20.79352,0.558420796174354
147320000.0,25.730740000000004,7.244877175660053,20.686020000000003,0.6341446440678971
148904000.0,25.66476,7.089041644002381,20.94236,0.5743952388382061
150488000.0,25.64024,7.1865064187266965,20.86286,0.508150991733756
152072000.0,25.759500000000003,7.055856932506496,20.81622,0.5482073746311699
153656000.0,25.616859999999996,6.99327066017039,20.52888,0.5085606095638943
155240000.0,25.36988,7.1227367867695355,20.675900000000002,0.2959898444203778
156824000.0,25.299,6.9829714789622335,20.77178,0.3406834976925056
158408000.0,25.871019999999998,6.751558367191978,20.532519999999998,0.15822042093231806
}\dataAblationRss
\input{data/ablation_when.tex}
\input{data/sota_steps_a.tex}
\pgfplotstableread[col sep=comma]{
0.0,45.0,0.0,45.0,0.0
40000.0,45.0,0.0,45.0,0.0
80000.0,45.0,0.0,45.0,0.0
120000.0,44.7969,0.14488411461118336,45.0,0.0
160000.0,44.78123333333334,0.15794311915651302,44.86976666666666,0.09405829161866774
200000.0,44.72393333333333,0.02946278254943948,44.091133333333325,0.42121988979101443
240000.0,43.770833333333336,0.2115958621733612,44.28906666666666,0.5390464069900546
280000.0,43.32293333333333,0.6185165281183328,43.359399999999994,1.0006783832314303
320000.0,43.1875,0.18501812523822084,43.33596666666667,0.6222617098581237
360000.0,43.9974,0.10170732520325194,42.872366666666665,1.1131963239049782
400000.0,43.96876666666666,0.3382510934531062,42.625,1.8416830400478816
440000.0,43.34116666666666,0.6324431797051435,41.934866666666665,0.6417085856437412
480000.0,43.22656666666666,1.0717516638143791,40.47656666666666,1.0164623794754486
520000.0,43.46873333333334,0.23237465916537106,40.427099999999996,0.7080753820509983
560000.0,42.52603333333334,0.2475887629832082,39.638,0.8169291809371632
600000.0,43.7578,0.21312958499466736,41.38803333333333,1.2770944896739458
640000.0,43.020833333333336,0.3305202599270148,40.7526,1.0559491275624957
680000.0,42.97656666666666,1.1153531707739743,39.22656666666666,0.6041848962767016
720000.0,43.562466666666666,0.733664638797743,39.31513333333333,0.8140088956652867
760000.0,43.0703,0.7213101598249311,37.59113333333334,0.4280378825394904
800000.0,43.15626666666666,0.6225386431557671,38.21353333333334,1.0343103896907473
840000.0,43.4297,0.6473916022521965,35.674499999999995,0.8398964340917293
880000.0,43.1927,1.036343157453168,37.177099999999996,0.9923480572191723
920000.0,42.796899999999994,0.6608019219100394,34.205733333333335,0.6118570711821135
960000.0,42.333333333333336,0.38386783605240443,34.49216666666667,1.1344456188915453
1000000.0,42.51303333333333,1.2509467596806623,33.8099,0.8193187698732803
1040000.0,41.705733333333335,0.8089172612545467,32.61976666666667,1.5082406093046163
1080000.0,41.356766666666665,1.2963538362996772,31.895866666666667,1.2874883697425088
1120000.0,41.4323,0.8625282178958958,30.885433333333335,0.7174406100081652
1160000.0,40.458333333333336,2.027559360961405,30.4896,1.0915951294626898
1200000.0,39.018233333333335,1.9166778138806264,29.6771,1.3748071282910928
1240000.0,40.0104,0.5456940046094221,29.265633333333337,0.16278120967182455
1280000.0,40.63543333333333,0.6827832517636095,27.567700000000002,1.3082649604214982
1320000.0,40.34116666666666,1.8929067670883544,26.666666666666668,1.1214133027370223
1360000.0,40.0703,1.903442724819074,25.968766666666667,1.275437144232866
1400000.0,38.919266666666665,1.5090106965234615,24.51823333333333,1.4627783526183618
1440000.0,38.640633333333334,2.653861168603629,25.09113333333333,0.6346169150667901
1480000.0,36.7474,2.062273940742758,24.60936666666667,0.9221963071325369
1520000.0,37.390633333333334,2.9199650732309976,24.15363333333333,1.1491664873096308
1560000.0,38.08593333333334,3.2673102213016487,23.067733333333337,1.3864725537212124
1600000.0,38.86456666666667,1.782004317864827,23.0625,0.8683283979386303
1640000.0,38.161500000000004,3.0338808886748776,23.317700000000002,0.7718692505858745
1680000.0,36.88803333333333,2.8811141321062275,22.7526,0.8307970911520259
1720000.0,36.34376666666667,1.5117059730281173,23.374966666666666,0.7188686497236861
1760000.0,36.770833333333336,1.241772830360778,22.119799999999998,1.4883521648678
1800000.0,35.94266666666667,2.547338953147425,22.447933333333335,0.8035054628867744
1840000.0,35.59636666666666,1.1332723160045093,21.739566666666665,1.1159860821513667
1880000.0,35.01303333333333,1.911977092494107,23.1901,1.9312508709814655
1920000.0,33.50000000000001,1.7791139423881772,21.48696666666667,1.1247164096883353
1960000.0,34.138,0.9010820199441695,21.6901,0.7055320687254417
2000000.0,34.828133333333334,1.4261743379482827,21.52603333333333,0.5811357978610119
2040000.0,33.421866666666666,1.1060086839120602,22.02606666666667,0.3067405019809995
2080000.0,30.625,2.847555340989882,21.351566666666667,0.36483167199256294
2120000.0,31.526033333333334,2.027837775124583,21.46356666666667,0.6496809285248336
2160000.0,30.127599999999997,2.1855337395397636,21.062466666666666,0.574345925572927
2200000.0,30.088533333333334,2.3924002805179216,20.90103333333333,0.3352675979307013
2240000.0,29.651033333333334,1.3595515690435909,20.838533333333334,0.3147017247419452
2280000.0,28.609366666666663,2.779131344303267,20.648433333333333,0.4783500972672172
2320000.0,28.1198,1.5200816710514824,20.367166666666666,0.5513487845476962
2360000.0,28.528666666666666,0.4762307937217921,20.5599,0.3904462660426745
2400000.0,28.630200000000002,1.4336602177643063,20.4844,0.9863108941910765
2440000.0,27.109366666666663,1.2763146564316432,20.929666666666666,0.634727785922613
2480000.0,26.609333333333336,2.048058341182909,21.3698,0.6611437413049198
2520000.0,26.320333333333334,2.2926136065392457,20.606766666666665,0.6166422968157659
2560000.0,26.835966666666668,3.5961376580375157,20.40886666666667,0.4807769152343125
2600000.0,26.885433333333335,2.2962760132198587,20.309866666666668,0.5584523932758779
2640000.0,25.513033333333336,2.034969772639278,20.4974,0.6655414988313406
2680000.0,25.473966666666666,0.925830813677939,20.51823333333333,1.0093922538945015
2720000.0,25.947933333333335,2.4634201432606297,20.3021,0.5981837566054984
2760000.0,25.255200000000002,0.49388643094001594,19.854166666666668,0.5364448423546347
2800000.0,25.994766666666663,0.8510661718626161,21.3073,0.5007077457626022
2840000.0,25.700499999999995,1.6307433294870988,20.523433333333333,0.22378855099301947
2880000.0,26.419300000000003,2.2431369240418646,20.588533333333334,0.7849951903603533
2920000.0,24.763,0.7963562310088791,19.908833333333334,0.9645571568802374
2960000.0,25.023433333333333,1.4198810427481425,19.778666666666666,0.5195587957317457
3000000.0,25.583333333333332,1.4546126868994678,20.0625,0.6622172654549767
3040000.0,25.317700000000002,0.8346437483541507,20.182299999999998,0.8245646608968873
3080000.0,25.273433333333333,1.3421907622324867,19.820333333333334,0.2180237958470493
3120000.0,24.656266666666667,1.2504554379194095,20.085933333333333,0.33973143641542736
3160000.0,24.8125,1.003384994240331,19.888,1.1048957597891305
3200000.0,24.479166666666668,1.590815401679975,20.218766666666667,0.28406656888052795
3240000.0,24.536466666666666,1.200620174002679,20.3177,0.5688410908739511
3280000.0,24.60676666666667,1.714488410252133,19.58073333333333,0.649260647882565
3320000.0,25.1224,0.9325663122087715,20.3073,0.24358189587898307
3360000.0,24.3646,1.2212802708633261,20.364566666666665,0.45668123334431837
3400000.0,23.8802,1.3279066407947016,19.721366666666665,0.5943654898760148
3440000.0,24.51823333333333,1.340435228656059,19.6198,1.0585406211698571
3480000.0,24.51303333333333,0.7700256330509753,19.76563333333333,0.9549889225651901
3520000.0,24.65363333333333,1.4714325091178628,19.95573333333333,0.45953894527256656
3560000.0,24.854200000000002,1.4380309407890592,19.9349,0.5888061537269007
3600000.0,25.40366666666667,1.0238632862296069,19.08073333333333,0.8733724075228286
3640000.0,24.97133333333333,1.2095523395960273,19.984333333333332,0.871150049583244
3680000.0,23.5625,0.9030402575005542,20.0,0.4783136209643203
3720000.0,24.04946666666667,1.1059412923940495,19.997400000000003,0.9443189397655851
3760000.0,24.403666666666666,1.678563277197365,19.729166666666668,0.15286251629777914
3800000.0,23.807299999999998,0.8830197091043137,19.411466666666666,0.4411789533612043
3840000.0,24.52863333333333,1.1741778580020237,19.28643333333333,0.5478501031811124
3880000.0,23.66926666666667,0.7773092277565958,19.453133333333334,0.8138757986053864
3920000.0,23.585933333333333,1.2445885995871175,19.5677,0.4663733268530696
3960000.0,24.10676666666667,1.7305350701123894,19.627566666666663,0.19993819600622176
4000000.0,25.0651,1.9826009734689425,19.2005,0.29650741418498644
}\dataSotaStepsB
\input{data/sota_pp_tarmac.tex}
\input{data/sota_lj.tex}
\pgfplotstableread[col sep=comma]{
0.0,0.6995,0.45847546281125834,0.7335,0.44212865774568194,0.6845,0.4647146974219786,0.0
1.0,0.8245,0.3803942034258698,0.8055,0.3958152978347431,0.81,0.3923009049186594,0.0
2.0,0.769,0.42147241902643934,0.771,0.42018924307982314,0.7535,0.4309730269982055,0.0
3.0,0.6405,0.47985388401054463,0.6305,0.4826694003145448,0.6175,0.48599768517967956,0.0
4.0,0.5725,0.4947158275212163,0.579,0.49371955602345224,0.577,0.4940354238311182,0.0
5.0,0.5675,0.4954227992331458,0.5775,0.493957234991051,0.567,0.4954906659060347,0.0
6.0,0.6135,0.48694737908730434,0.598,0.4903019477832021,0.587,0.492372826220129,0.0
7.0,0.6075,0.48830702431974404,0.6195,0.4855097836295304,0.608,0.4881966816765523,0.0
8.0,0.6175,0.4859976851796796,0.6285,0.48320570153920034,0.624,0.48438001610305875,0.0
9.0,0.604,0.4890644129355555,0.6025,0.48938098655343276,0.6205,0.48526255779732347,0.0
10.0,0.6035,0.4891704713083068,0.603,0.4892759957324638,0.613,0.4870636508712128,0.0
11.0,0.569,0.49521611443894126,0.5775,0.49395723499105104,0.5915,0.491556456574413,0.0
12.0,0.5505,0.49744321283942483,0.5515,0.4973406780065405,0.5735,0.4945682460490181,0.0
13.0,0.52,0.49959983987186446,0.529,0.49915829152683877,0.5395,0.4984373079936843,0.0
14.0,0.4895,0.4998897378422605,0.495,0.49997499937496426,0.52,0.4995998398718645,0.0
15.0,0.454,0.497879503494571,0.473,0.49927046778274503,0.4795,0.4995795732413483,0.0
16.0,0.419,0.49339537898119873,0.446,0.4970754469896888,0.464,0.49870231601628906,0.0
17.0,0.396,0.48906441293555547,0.426,0.4944936804449518,0.437,0.4960151207372534,0.0
18.0,0.3785,0.48501314415179175,0.39,0.48774993593028887,0.421,0.4937195560234521,0.0
19.0,0.3485,0.4764952780458615,0.3595,0.4798538840105446,0.383,0.4861182983595639,0.0
20.0,0.3045,0.46019533895945114,0.3085,0.4618741711765268,0.33,0.47021271782035096,0.0
21.0,0.271,0.444476096095166,0.274,0.44600896851969957,0.286,0.45188936699152016,0.0
22.0,0.2235,0.41659062639477806,0.228,0.4195426080864742,0.245,0.4300872004605528,0.0
23.0,0.191,0.39308904843559206,0.1875,0.3903123748998999,0.2015,0.40112061776977026,0.0
24.0,0.1575,0.36427153333743306,0.1585,0.3652091866314462,0.177,0.38166870450693346,0.0
25.0,0.124,0.3295815528818369,0.124,0.3295815528818369,0.1395,0.3464675309462678,0.0
26.0,0.0935,0.29113184298527195,0.0975,0.2966374049239196,0.1015,0.3019896521405944,0.0
27.0,0.081,0.2728351150420381,0.082,0.27436472076416973,0.086,0.28036404905051604,0.0
28.0,0.067,0.2500219990320915,0.065,0.2465258607124251,0.075,0.2633913438213165,0.0
29.0,0.0475,0.21270578271406063,0.0495,0.21690954335851453,0.0535,0.22502833154961246,0.0
30.0,0.0455,0.20839805661282315,0.047,0.21163884331567895,0.049,0.21586801523153235,0.0
31.0,0.0355,0.1850398605706378,0.0375,0.18998355191963628,0.038,0.19119623427254026,0.0
32.0,0.032,0.17600000000000207,0.035,0.18377975949488906,0.0315,0.17466467874187178,0.0
33.0,0.026,0.1591351626762578,0.0265,0.1606167799453133,0.0265,0.1606167799453127,0.0
34.0,0.0205,0.14170303454760644,0.024,0.15304901175767266,0.023,0.14990330216509642,0.0
35.0,0.0215,0.14504395885385973,0.0205,0.1417030345476061,0.023,0.1499033021650964,0.0
36.0,0.0175,0.1311249404194359,0.018,0.13295111883696015,0.0215,0.14504395885386004,0.0
37.0,0.0155,0.12353036064061435,0.0155,0.12353036064061432,0.0175,0.1311249404194366,0.0
38.0,0.0135,0.11540255629751213,0.0115,0.10661965109678447,0.0135,0.11540255629751374,0.0
39.0,0.0085,0.09180277773575347,0.0085,0.09180277773575342,0.0125,0.11110243021644435,0.0
}\dataTrajComm
\pgfplotstableread[col sep=comma]{

0,0,0,0,0
0,0,0,0,0
0,0,0,0,0
0,0,0,0,0
0,0,0,0,0
0,0,0,0,0
0,0,0,0,0
0,0,0,0,0
0,0,0,0,0
0,0,0,0,0
0,0,0,0,0
0,0,0,0,0
0,0,0,0,0
0,0,0,0,0
0,0,0,0,0
0,0,0,0,0
0,0,0,0,0
0,0,0,0,0
0,0,0,0,0
0,0,0,0,0
0,0,0,0,0
0,0,0,0,0
0,0,0,0,0
0,0,0,0,0
0,0,0,0,0
0,0,0,0,0
0,0,0,0,0
0,0,0,0,0
0,0,0,0,0
0,0,0,0,0
0,0,0,0,0
0,0,0,0,0
0,0,0,0,0
0,0,0,0,0
0,0,0,0,0
0,0,0,0,0
0,0,0,0,0
0,0,0,0,0
0,0,0,0,0
0,0,0,0,0
0,0,0,0,0
0,0,0,0,0
0,0,0,0,0
0,0,0,0,0
0,0,0,0,0
0,0,0,0,0
0,0,0,0,0
0,0,0,0,0
0,0,0,0,0
0,0,0,0,0
0,0,0,0,0
0,0,0,0,0
0,0,0,0,0
0,0,0,0,0
0,0,0,0,0
0,0,0,0,0
0,0,0,0,0
0,0,0,0,0
0,0,0,0,0
0,0,0,0,0
0,0,0,0,0
0,0,0,0,0
0,0,0,0,0
0,0,0,0,0
0,0,0,0,0
0,0,0,0,0
0,0,0,0,0
0,0,0,0,0
0,0,0,0,0
0,0,0,0,0
0,0,0,0,0
0,0,0,0,0
0,0,0,0,0
0,0,0,0,0
0,0,0,0,0
0,0,0,0,0
0,0,0,0,0
0,0,0,0,0
0,0,0,0,0
0,0,0,0,0
0,0,0,0,0
0,0,0,0,0
0,0,0,0,0
0,0,0,0,0
0,0,0,0,0
0,0,0,0,0
0,0,0,0,0
0,0,0,0,0
0,0,0,0,0
0,0,0,0,0
0,0,0,0,0
0,0,0,0,0
0,0,0,0,0
0,0,0,0,0
0,0,0,0,0
0,0,0,0,0
0,0,0,0,0
0,0,0,0,0
0,0,0,0,0
0,0,0,0,0
0,0,0,0,0
}\dataDummy
\input{data/bw_limit_steps.tex}

\definecolor{color_env_inner}{HTML}{AAF0D1}
\definecolor{color_env_outer}{HTML}{64AA8B}
\definecolor{color_nn_inner}{HTML}{AED6F1}
\definecolor{color_nn_outer}{HTML}{21618C}
\definecolor{color_light_purple}{HTML}{C241C1}
\definecolor{color_nn_background}{HTML}{E0E0E0}

\definecolor{color_commnet_}{HTML}{2ca02c}
\definecolor{color_ic3net_}{HTML}{1f77b4}
\definecolor{color_laurel_on}{HTML}{ff7f0e}
\definecolor{color_qmix_}{HTML}{9467bd}
\definecolor{color_tarmac_}{HTML}{8c564b}
\definecolor{color_laurel_off}{HTML}{e377c2}

\pagenumbering{gobble}

\makeatletter
\let\Ginclude@graphics\@org@Ginclude@graphics 
\makeatother

\jmlrvolume{}
\jmlryear{2022}

\title[Learning Practical Communication Strategies in Cooperative MARL]{Learning Practical Communication Strategies in Cooperative Multi-Agent Reinforcement Learning}

 \author{\Name{Diyi Hu} \Email{diyihu@usc.edu}\\
\Name{Chi Zhang} \Email{zhan527@usc.edu}\\
  \Name{Viktor Prasanna} \Email{prasanna@usc.edu}\\
  \Name{Bhaskar Krishnamachari} \Email{bkrishna@usc.edu}\\
\addr University of Southern California}

\editors{Emtiyaz Khan and Mehmet Gonen}

\begin{document}

\maketitle

\begin{abstract}
    In Multi-Agent Reinforcement Learning, communication is critical to encourage cooperation among agents. Communication in realistic wireless networks can be highly unreliable due to network conditions varying with agents' mobility, and stochasticity in the transmission process. 
    We propose a framework to learn practical communication strategies by addressing three fundamental questions:
    (1) \emph{When}: Agents learn the timing of communication based on not only message importance but also wireless channel conditions.
    (2) \emph{What}: Agents augment message contents with wireless network measurements to better select the game and communication actions. 
    (3) \emph{How}: Agents use a novel neural message encoder to preserve all information from received messages, regardless of the number and order of messages. 
    Simulating standard benchmarks under realistic wireless network settings, we show significant improvements in game performance, convergence speed and communication efficiency compared with state-of-the-art. 
\end{abstract}
\begin{keywords}
Multi-agent Reinforcement Learning; Wireless communication
\end{keywords}

\section{Introduction}  
\label{sec: intro}

In Multi-Agent Reinforcement Learning (MARL), communication plays a key role in facilitating knowledge sharing and collaboration \citep{dial,flow}. 
The information in agents' messages alleviates the limitation of \emph{partial observation}. 
Communication leads to better exploration on the huge state and action space, and thus improves the training quality and convergence speed.

While there have been numerous works in the literature to improve communication in MARL, most are based on \emph{unrealistic} modeling of the wireless network environment (\eg, assuming perfect transmission). 
In practical applications, however, transmission can fail due to many factors such as limited bandwidth, signal path loss and fading, medium contention, interference, \textit{etc}. Moreover, in applications involving navigation (\eg, Search-And-Rescue (SAR) \cite{sar}, fire fighting \cite{firefight}, battlefield defense \cite{battlefield}), link conditions are dynamic as they vary with agents' mobility and relative positions. 
It remains an open problem to learn a practical policy by addressing such complexity in realistic wireless communications.

The following challenges exist to train communicating agents in a realistic wireless environment. The first challenge, \emph{environment coupling}, is due to the mutual influence between the game environment and the wireless environment. 
On the one hand, agents' mobility in the game environment leads to dynamic network connectivity and agents' communication actions have significant impact on medium contention and signal interference. 
On the other hand, changes in the wireless environment can affect agents' actions in the game environment. 
For example, when an agent generating an important message is far away from other agents, it may intentionally approach others to increase the receivers' signal strength. 
The second challenge, \emph{channel stochasticity}, means it is a non-deterministic process whether an agent can successfully receive the message from others. As described in the previous paragraph, many practical factors
can result in failed transmission.

The stochasticity not only increases the complexity of the environment, but also makes it more challenging for agents to extract useful information.

\paragraph{Proposed work} We propose a framework, {\coolname}, to \underline{L}earn pr\underline{A}ctical comm\underline{U}nication strategies in cooperative Multi-Agent \underline{RE}inforcement \underline{L}earning. 
Our framework optimizes three fundamental aspects of communication (\ie, when, what and how) without unrealistic simplifications on the wireless network. 
To address the environment coupling challenge, we propose to delay the message sending time by one step so as to align agents' interactions in the game and wireless environments. 
The alignment solves the issue of training non-differentiability in \cite{schednet}. 
More importantly, it leads to a ``meta-environment'' abstraction, where the Markov Decision Process (MDP) controlling the agents' behaviors can be reformulated, and both the action and observation spaces can be expanded.
Specifically, we augment the agents' observation by important network measurements, so that they can understand and predict the dynamics due to coupling. 
Then we end-to-end train a practical communication strategy with neither pre-defined schedule nor prior knowledge on the wireless condition (unlike \cite{schednet,nips16_commnet}). 
As a result, agents learn when to send messages based on both the content relevance and the wireless channel conditions.
To further improve {\coolname} under high channel \emph{stochasticity}, we propose a novel neural architecture to encode the received messages. 
The encoder takes as input \emph{any} number of messages and theoretically guarantees a lossless encoding process. 
Finally, we generalize {\coolname} with popular backbone MARL algorithms, and build both the on- and off-policy variants in a plug-and-play fashion. 
On standard benchmarks, we achieve significantly better performance with respect to game performance, convergence rate and communication efficiency, compared with state-of-the-art methods.

\section{Related Work} 
\label{sec: related}

CommNet \citep{nips16_commnet} and BiCNet \citep{bicnet} propose to learn the communication contents in the MARL system to boost performance. However, both works perform all-to-all communication at each step, under the assumption of \emph{ideal wireless network} conditions (\ie whatever messages sent out will be successfully received by all agents). In terms of message aggregation under such assumption, a simple mean or sum function is used. TarMAC \citep{tarmac} and MAGIC \citep{magic} later introduce attention-based mechanisms to improve message aggregation.

Recent works consider \emph{resource constrained wireless networks}, which improves the ideal model by considering limited bandwidth. 
When multiple agents simultaneously send $k'$ messages, the environment ensures $\min\{k, k'\}$ messages to be successfully received by all agents (where $k$ is a manually configured constant characterizing the bandwidth limit). 
VBC \citep{vbc} limits the variance of the message and only those with large variance are exchanged during execution.
SchedNet \citep{schednet} generates an importance weight for the message, and exactly $k$ most important messages are exchanged in each step.
RMADDPG \citep{rmaddpg} adopts a recurrent multi-agent actor-critic framework which limits the number of communication actions during training.
Note that these works still assume perfect transmission, and ignore many critical factors in a realistic wireless environment.
All works above assume at least one of the following: 
    (1) Dedicated and perfect wireless channels exist between any pair of agents. Any transmitted message can be delivered to all other agents successfully. 
    (2) Message importance is the only factor when deciding ``when to send a message''. 
    (3) Channel condition is fixed and known in advance to allow scheduling agents' communication with a fixed budget. 
In comparison, our framework does not have any of the above assumptions and considers realistic wireless channels.

\section{Preliminaries} 
\label{sec: problem defn}

\parag{Dec-POMDP}
Agents solve a problem formulated as a Decentralized Partially-Observable Markov Decision Process (Dec-POMDP), which is represented by $\paren{\mathcal{S}, \bm{\mathcal{A}}, \mathcal{T}, \bm{{\Omega}}, \mathcal{O}, \mathcal{R}, \mathcal{\gamma}}$. 
$\mathcal{S}$, $\bm{\mathcal{A}}$ and $\bm{{\Omega}}$ are the state space, joint action space, and joint observation space for the $N$ agents, respectively.
At step $t$, each agent $i\in N$ takes action $a_{i}$ by following policy $\pi_{\theta_{i}}$. The joint action $\bm{a}$ leads to state transition captured by function $\mathcal{T}(s'|s,\bm{a}): \mathcal{S} \times \bm{\mathcal{A}} \times  \mathcal{S} \rightarrow [0,1]$.
The agent also draws local observation $o_{i}\in{\Omega}_{i}$, with joint observation probability $P(\bm{o}|s', \bm{a}) = \mathcal{O}(\bm{o}, s', \bm{a})$, where $\bm{o}$ is the joint observation.
Meanwhile, agents receive a joint reward $r_{t} = \mathcal{R}(s, \bm{a})$.  The goal is to find an optimum policy $\bm{\pi_{{\theta}^{*}}}$ that maximizes the expected long-term reward (\ie, return), with discount factor $\gamma\in [0,1]$:
$J = \mathbb{E}_{\bm{\pi_{\theta}}}\left[\sum_{t=1}^{\infty}\gamma^{t} r_t\right]$.

\paragraph{Communication Environment}

In MARL applications with mobile agents (\eg, SAR), agents altogether formulate a Mobile Adhoc network (MANET) \citep{manet_overview} to communicate. In a large and complicated terrain, network conditions vary with agents' mobility. \eg, there may be obstacles of different materials blocking the signal propagation. Under popular wireless protocols (\eg, $p$-CSMA \cite{gaiyi}), before agents send, they contend to access the medium with certain probability to avoid collision. The signal strength of a transmitted data packet may be affected by path loss, attenuation, interference with other packet signals, \textit{etc}.  In particular, for path loss, log-normal fading model (see Appendix \ref{appendix: pathloss}) is widely used to capture the Radio Signal Strength (RSS) variation over the spatial domain. 
For interference, it happens when receiver is in range of multiple senders at the same time. When SINR (Signal-to-Interference-plus-Noise-Ratio, in terms of signal power level) is below a given threshold, the packet cannot be decoded correctly. In such a case, a broadcast message cannot be received by all agents.

\section{Method}
\label{sec: method}

\subsection{When: MDP Re-Formulation}
\label{sec: method mdp}

\begin{figure}
    \floatconts
    {fig: 2env}
	{\caption{Two ways of agent-environment interaction}}
    {
    \subfigure[Original: non-differentiable training]{
    \label{fig: 2env orig}
    \tikzset{cross/.style={cross out, draw=black, minimum size=2*(#1-\pgflinewidth), inner sep=0pt, outer sep=0pt},
cross/.default={1pt}}
\begin{tikzpicture}[
    >={Stealth[inset=0pt,length=4pt,angle'=45,round]},
    env_block/.style={ellipse,draw=color_env_outer,fill=color_env_inner},
    nn_block/.style={draw=color_nn_outer,fill=color_nn_inner},
    scale=0.65, every node/.style={transform shape}
]
    \node[nn_block] (nn msg) at (0, 0) {\makecell{Neural\\Net}};
    \node[left=.4cm of nn msg.west] (obs t) {\large$o_i^t$};
     \draw[->] (obs t.east) -- (nn msg.west);
     \node[right=.6cm of nn msg.east,yshift=-.4mm,anchor=north] (msg self) {\large$m_i^t$};
     \node[right=.6cm of nn msg.east,yshift=.4mm,anchor=south] (weight self) {\large$w_i^t$};

     \node[above=.8mm of weight self.north,anchor=south,xshift=-.3cm] (msg others) {\large$\bm{w}_{-i}^t$};
     \node[above=.8mm of msg others] (weight others) {\large$\bm{m}_{-i}^t$};
     \node[env_block,inner sep=.4mm] (env wifi) at ($(weight others)!.50!(msg others)+(1.7,0)$) {\makecell{Wireless\\Env.}};
     
    \node[nn_block] (nn as) at ($(msg self.east-|env wifi.south)+(1.2,0)$) {Neural Net};
    \node (msg recv) at ($(env wifi.east-|nn as.north)!.5!(nn as.north)+(.15,0)$) {\large$\bm{c}_{i}^t$};
    
     \node[right=.2cm of nn as.east] (act self) {\large$a_i^t$};
    \node (act others) at (act self.north |- env wifi.east) {\large$\bm{a}_{-i}^t$};
    \node[env_block,inner sep=.4mm] (env game) at ($(act others)!.5!(act self)+(.8,0)$) {\makecell{Game\\Env.}};
    
    \node[right=.3cm of env game.east,yshift=.3cm] (obs t1) {\large$o_i^{t+1}$};
    \node[right=.3cm of env game.east,yshift=-.3cm] (reward t) {\large$r_i^{t}$};
    
    \foreach \type in {weight self,msg self} {
        \draw (nn msg.east|-\type.west) -- (\type.west);
    }
    \draw[->] (msg self.east) -- (nn as.west);
    \draw[->] ($(env wifi.south|-msg self.east)-(0,0)$) -- ($(env wifi.south)-(0,0)$);
    \draw[->] (weight self.east) -- ($(env wifi.south|-weight self.east)-(.4,0)$) -| ($(env wifi.south)-(.4,0)$);
    \foreach \type in {weight others,msg others} {
        \draw[->] (\type.east) -- (env wifi.west|-\type.east);
    }
    \draw (env wifi.east) -| (msg recv.north);
    \draw[->] (msg recv.south) -- (nn as.north-|msg recv.south);
    \draw (nn as.east) -- (act self.west);
    \draw[->] (act self.east) -| (env game.south);
    \draw[->] (act others.east) -| (env game.north);
    \foreach \out in {obs t1,reward t} {
        \draw[->] (env game.east|-\out.west) -- (\out.west);
    }
\end{tikzpicture}
    }
        \subfigure[New: alignment after lagging]{
        \label{fig: 2env new}
        \tikzset{cross/.style={cross out, draw=black, minimum size=2*(#1-\pgflinewidth), inner sep=0pt, outer sep=0pt},
cross/.default={1pt}}
\begin{tikzpicture}[
    >={Stealth[inset=0pt,length=4pt,angle'=45,round]},
    env_block/.style={ellipse,draw=color_env_outer,fill=color_env_inner},
    nn_block/.style={draw=color_nn_outer,fill=color_nn_inner},
    scale=0.65, every node/.style={transform shape}
]
    \node[nn_block, minimum height=1.5cm] (nn msg) at (0, 0) {\makecell{Neural\\Net}};
    \node[left=.3cm of nn msg.west,yshift=.4cm] (obs t) {\large$o_i^t$};
    \node[left=.3cm of nn msg.west,yshift=-.4cm] (obs t wifi) {\large${o'}_i^t$};
    \node[below =.4cm of obs t wifi,xshift=-.05cm,anchor=center] (note input) {\makecell{(including\\$\bm{c}_i^{t-1}$)}};
    \draw[->] let
        \p1 = (obs t.east), \p2 = (nn msg.west)
      in (\x1,\y1) -- (\x2,\y1);
    \draw[->] let
        \p1 = (obs t wifi.east), \p2 = (nn msg.west)
      in (\x1,\y1) -- (\x2, \y1);
    \node[right=.6cm of nn msg.east,yshift=.6cm,anchor=center] (act) {\large$a_i^t$};
    \node[right=.6cm of nn msg.east,anchor=center] (msg) {\large$m_i^t$};
    \node[right=.6cm of nn msg.east,yshift=-.6cm,anchor=center] (act others) {\large${a'}_i^t$};
    \draw let
        \p1 = ($(nn msg.east)+(0,.6cm)$), \p2 = (act.west)
      in (\x1,\y1) -- (\x2,\y1);
    \draw let
        \p1 = (nn msg.east), \p2 = (msg.west)
      in (\x1,\y1) -- (\x2,\y1);
    \draw let
        \p1 = ($(nn msg.east)-(0,.6cm)$), \p2 = (act others.west)
      in (\x1,\y1) -- (\x2,\y1);
    \node[env_block,right=3cm of msg,yshift=.4cm,anchor=center] (env game) {Game Env.};
    \node[env_block,right=3cm of msg,yshift=-.4cm,anchor=center] (env wifi) {Wireless Env.};
    \node[ellipse,fit=(env game) (env wifi),inner sep=0mm,draw,dashed] (env meta) {};
    \draw[->] (msg) -- (env meta);
    \draw[->] let
        \p1 = (act.east), \p2 = ($(env meta.west)+(0,.6cm)$)
      in (\x1,\y1) -- (\x2,\y2);
    \draw[->] let
        \p1 = (act others.east), \p2 = ($(env meta.west)+(0,-.6cm)$)
      in (\x1,\y1) -- (\x2,\y2);
    
    \node[right=.4cm of env meta] (reward) {\large$r_i^t$};
    \node[right=.4cm of env meta,yshift=.6cm] (obs t1) {\large$o_i^{t+1}$};
    \node[right=.4cm of env meta,yshift=-.6cm] (obs wifi t1) {\large${o'}_i^{t+1}$};
    \node[below=.15cm of obs wifi t1,xshift=-.5cm,anchor=center] (note output) {(including $\bm{c}_i^{t}$)};
    \draw[->] let
        \p1 = (env meta.east), \p2 = (obs t1.west)
      in (\x1,\y2) -- (\x2,\y2);
    \draw[->] (env meta.east) -- (reward.west);
    \draw[->] let
        \p1 = (env meta.east), \p2 = (obs wifi t1.west)
      in (\x1,\y2) -- (\x2,\y2);
    
    \node[below=.3cm of env meta.south] (msg input) {\large$\bm{m}_{-i}^t$};
    \node[left=.4cm of msg input,anchor=center] (act wifi others input) {\large${\bm{a}}_{-i}^t$};
    \node[right=.4cm of msg input,anchor=center] (act others input) {\large${\bm{a}'}_{-i}^t$};
    \draw[->] (msg input) -- (env meta);
    \draw[->] let
        \p1 = (act wifi others input.north), \p2 = (env meta.south)
      in (\x1,\y1) -- (\x1,\y2);
    \draw[->] let
        \p1 = (act others input.north), \p2 = (env meta.south)
      in (\x1,\y1) -- (\x1,\y2);
      
    \node (note meta env) at ($(env meta.south)+(-2cm,-.1cm)$) {Meta Env.};
\end{tikzpicture}
    }
    }
\end{figure}

Use subscript ``$-i$'' to denote variables from all agents other than $i$. 
Communication can facilitate exchange of \emph{current} observations. A natural design, followed by most existing works \citep{dial,nips16_commnet,ic3net}, is to send and receive a message within the same step. 
In Figure \ref{fig: 2env}(a), 
at the beginning of step $t$, agent $i$ makes local observation $o_i^t$ on the game environment. 
From $o_i^t$, a neural network generates a message $m_{i}^{t}$ with weight $w_{i}^{t}$ measuring message importance / relevance.
Then based on $w_{i}^{t}$ and $\bm{w}_{-i}^{t}$, the messages $m_i^t$ and $\bm{m}_{-i}^t$ contend to go through the wireless channel. 
Since some transmissions may fail, we use $\bm{c}_i^t$ to denote the messages successfully received by $i$. 
Next, agent $i$'s own message and all its received ones go through another neural network to generate the next action $a_i^t$. 
Finally, all agents interact with the game environment via $a_i^t$ and $\bm{a}_{-i}^t$. 
The game environment returns the current reward $r_i^t$ and next observation $o_i^{t+1}$. 

A major drawback of the Figure \ref{fig: 2env}(a) setting is training non-differentiability. 
When optimizing policy $\pi_{\theta}$ via back-propagation, the gradients of the parameters $\theta$ need to flow from $a_i^t$ back to $o_i^t$. 
However, the wireless environment lies in the middle of step $t$. 
The mapping implemented by the realistic wireless channel, ${\bm{m}^t, \bm{w}^t} \rightarrow \bm{c}_i^t$, is non-differentiable \citep{schednet} (due to ``stochasticity'' in transmission).

\paragraph{Communication Lagging}
We propose ``communication lagging'' to make training differentiable. 
Denote $a'$ as communication actions and $o'$ as wireless observations (see Section \ref{sec: method what}). 
As shown in Figure \ref{fig: 2env}(b), an agent observes at the beginning of step $t$. Yet it does not generate the message $m_i^t$ until the end of step $t$. 
After lagging, the ``message-wireless environment'' and ``agent-game environment'' interactions happen simultaneously (enclosed by dotted circle). 
Thus, the two environments are \emph{aligned}. 
Lagging leads to the following tradeoffs:
(1) \emph{Ease of training}: the aligned environments enable end-to-end policy training, as the gradients can flow backward from the end of step $t$ (\ie, $a_i^t$ and ${a'}_i^t$) to the beginning of step $t$ (\ie, $o_i^t$ and ${o'}_i^t$). 
We thus develop a general framework (Section \ref{sec: method complete}) supporting any existing MARL algorithm on top of realistic wireless environment. 
(2) \emph{Staleness of messages}: action $a_i^t$ is generated with stale information $\bm{c}_i^{t-1}$ since the step-$(t-1)$ messages are received at step $t$. 
Such staleness may only have slightly negative impact when the step duration is short in practice \citep{vbc}. 

\paragraph{Meta-Environment}
\label{sec: method when meta}
We abstract a ``meta-environment'' from aligned game and wireless environments. 
Meta-environment is the \emph{new} game environment: any algorithm on the original game environment can be directly applied on the meta-environment. 
To define agents' interaction with the meta-environment, we reformulate the Markov Decision Process (MDP) with expanded state and action spaces: $\paren{\mathcal{S}, \bm{\mathcal{A}}, \mathcal{T}, \bm{\Omega}, \mathcal{O}, \mathcal{R}, \mathcal{\gamma}}$. 
Denote $\mathcal{S}$ as the state of the meta-environment. 
The \textit{augmented} action space
$\bm{\mathcal{A}}$ consists of the game actions $\bm{\mathcal{A^{T}}}$ and communication actions $\bm{\mathcal{A^{C}}}$. \ie, $\bm{\mathcal{A}} = \bm{\mathcal{A^{T}}} \times \bm{\mathcal{A^{C}}}$. 
The \textit{augmented} observation space $\bm{\Omega}$ consists of the game observations $\bm{\Omega^{T}}$ and wireless observations $\bm{\Omega^{C}}$. \ie, $\bm{\Omega} = \bm{\Omega^{T}} \times \bm{\Omega^{C}}$. 
$\mathcal{T}$ and $\mathcal{O}$ are defined on the meta-state and augmented observation and action spaces.

\paragraph{Learning Communication Actions}
\label{sec: method when action}
Under the reformulated MDP, agents can learn a policy on communication actions to decide ``when to communicate''. 
The simplest is the binary action for ``send / no-send'': $a_{i}^{C}\in \mathcal{A}_{i}^{C}=\left\{0,1\right\}$. 
Other examples of learnable $a_{i}^{\mathcal{C}}$ include:
(1) \emph{Transmission power} $P_t$: larger $P_t$ improves the received signal strength at the cost of higher probability of interference. 
(2) \emph{Medium contention probability} $p$: larger $p$ increases the probability of securing the medium by suppressing other agents' chance.

\subsection{What: Observation Augmentation}
\label{sec: method what}

Efficiently exploring the state space $\mathcal{S}$ and action space $\mathcal{A}$ relies on suitable observations, especially considering the complexity in wireless environment. 
However, since agents may be far away, it is infeasible for them to directly observe the full network condition (\ie, the wireless environment is partially observable). 
To address the issue, we first augment the observation space $\mathcal{O}$ by observation / measurement $o_i^{\mathcal{C}, t}$ on the wireless environment. 
\ie. for each agent, we concatenate the wireless and game observations $o_{i}^{t} = o_i^{\mathcal{T}, t} \|o_i^{\mathcal{C}, t}$. 
Then we let the message sent by $i$ contain $i$'s hidden state $\bm{h}_i^t$, where $\bm{h}_i^t$ embeds information of $o_i^t$. 
When $j$ receives from $i$, 
agent $j$ predicts the current network condition. 
If such condition is believed as poor, $j$ may suppress its sending action. 

\paragraph{Network Measurements}
Common factors resulting in failed transmission include limited bandwidth, noises, signal attenuation due to obstacles and packet collision. 
The network condition is complicated: some of the above factors are related to the wireless environment (\eg, limited bandwidth), some are related to the game environment (\eg, obstacles), and others are even related to agents' policies (\eg, collision due to simultaneous sendings). 
Agents can understand the network condition from various types of network measurements, such as \emph{Radio Signal Strength} (RSS) and \emph{Acknowledgement packets} (ACK). The sender agent can use ACK to infer the packet receiving rate and the data rate. For brevity, we 
now only illustrate the benefits of augmenting observation via RSS $P_s$ in detail. 
When agent $i$ receives the message from $j$, it
measures $P_s$ and also infers $j$'s position (thus distance $d$ between $i$ and $j$). 
When agent $i$ receives more messages from other agents across multiple steps, it essentially collects many $\paren{P_s, d}$ data points. 
The agent can then learn the function $P_s = g\paren{d}$ corresponding to the path loss model of the particular wireless environment. 
With $g$, when an agent knows (or predicts) the position of others, it immediately knows the corresponding RSS if it sends a message. 
Ignoring collisions, the agent knows if its message can be successfully received or not, \emph{even before it sends out the message}. 
Such knowledge clearly helps with a better policy on communication actions. 
In addition, RSS information can also reveal terrain information on the game environment. If there is an obstacle between $i$ and $j$, the received RSS will decrease due to attenuation. 
If the measured $P_s$ is significantly lower than the estimated $g\paren{d}$, the agent detects an obstacle. 
Communication then serves as a form of long-range sensing on the terrain, and helps agents better plan their future movements.

\subsection{How: Message Encoder}
\label{sec: method how}
\parag{Stochasticity in Communication}\label{sec: method how stochasticity}
An agent does not know 
(1) who will send a message at what time, 
(2) whether the message will have large enough signal strength to be successfully received, 
(3) at what order messages will arrive, 
(4) what the message will contain. 
For point 1, agents in non-deterministic RL algorithms select communication actions by random sampling from a probability distribution. 
For point 2, many factors (\eg, path loss, fading, noise, interference) can cause packet loss. 
For point 3, due to medium contention and varying network latency, the multiple messages to be received by a single agent in one step can arrive in an arbitrary order. 
For point 4, for the hidden state embedding contained in a message, 
the vector elements can take arbitrary numerical values representable by a floating point number.

\paragraph{Message Encoder}
\label{sec: method how arch}
We design a neural architecture to encode received messages by addressing all the stochasticity. 
In the following, we consider a single step, and thus omit superscript $\paren{t}$. 
Denote vector $\bm{m}_{ij}\in\R[l]$ as the message from agent $j$ to $i$. 
Denote $\mathcal{N}_i$ as the set of agents successfully sending to $i$. 
Cardinality of $\mathcal{N}_i$ is from $0$ (when $i$ receives no message) to $n-1$ (when $i$ receives from all other agents).
Define $c_i = \left\{\bm{m}_{ij}: j\in\mathcal{N}_i\right\}$ as the set of messages received by $i$. 
The encoder performs a function $\Phi\paren{c_i}$ to map the set of messages to a vector in $\R[d']$. 
$\Phi$ should satisfy the following two properties.

\subparag{\textbf{Permutation invariance}}{}
This property addresses the stochasticity 3 above. 
Since we define the input to $\Phi$ as an \emph{un-ordered} set $c_i$, we need to preserve a well-known property of such a set function, namely permutation invariance \citep{deepsets}. 
By definition, for any $c_i = \left\{\bm{m}_{i1}, \hdots, \bm{m}_{ik}\right\}$ and any permutation $\rho$ on the indices $\left\{1,\hdots,k\right\}$, a permutation invariant function satisfies 
\begin{align}
	\label{eq: perm}
  \Phi\paren{\left\{\bm{m}_{i1}, \hdots, \bm{m}_{ik}\right\}}=\Phi\paren{\left\{\bm{m}_{i\rho\paren{1}},\hdots,\bm{m}_{i\rho\paren{k}}\right\}}
\end{align}

We show an example in our scenario. 
Suppose agent 1 receives messages from agents 2, 3 and 4. 
Imagine two sequences of message arrival, $\paren{2, 3, 4}$ and $\paren{3, 4, 2}$. 
The encoder should not care about the sequence and should generate the same output for both cases. Thus, $\Phi\paren{\left\{\bm{m}_{12}, \bm{m}_{13},\bm{m}_{14}\right\}}=\Phi\paren{\left\{\bm{m}_{13},\bm{m}_{14},\bm{m}_{12}\right\}}$.

\subparag{\textbf{Injectiveness}}{}
This property addresses stochasticity 1, 2, and 4. 
To preserve all information in the encoding process, we should have
\begin{align}
	\label{eq: inj}
 c_i= c_i'\quad\Leftrightarrow\quad\Phi\paren{c_i}= \Phi\paren{c_i'}
\end{align}
\noindent 
This way, agent $i$ generates a unique encoding for $c_i$ so that it knows exactly the status of \emph{all} the agents who successfully sends a message to it. 
For a vector function, injectiveness is easy to satisfy. 
For example, a linear mapping $\Phi'(\bm{m'})=\bm{W}\bm{m}'$ is injective when $\bm{W}$ has rank $l$. 
However, it is non-trivial to ensure injectiveness of $\Phi$ whose input is a \emph{set} with non-fixed size. In the literature \citep{nips16_commnet,ic3net,schednet}, a common way to implement $\Phi$ is by first aggregating the messages into a single vector and then encode the aggregated vector by a vector function. \eg, sum aggregation $\Phi\paren{c_i}=\Phi'\paren{\sum_{j\in \mathcal{N}_i}\bm{m}_{ij}}$. 
Unfortunately, such a $\Phi$ is not injective since regardless of $\Phi'$, we can find cases where $c_i\neq c_i'$ yet $\sum_{j\in \mathcal{N}_i}\bm{m}_{ij} = \sum_{j\in \mathcal{N}_i'}\bm{m}_{ij}'$.

Different from the literature, we implement the message encoder as follows, 

\begin{align}
	\label{eq: how gnn}
    \text{Message encoder:}\hspace{.7cm}\textstyle\Phi\paren{c_i} = \sum_{j\in\mathcal{N}_i}\func[MLP]{\bm{m}_{ij}}
\end{align}

\noindent
where  $\func[MLP]{\cdot}$ is a Multi-Layer Perceptron. 
All received messages go through a shared MLP.  
Our encoder theoretically satisfies the above two properties and preserves all information.

\begin{theorem}
\label{thm: gin emb}
There exists a set of parameters for the encoder architecture defined by Equation \ref{eq: how gnn}, such that $\Phi$ is both permutation invariant and injective. 
\end{theorem}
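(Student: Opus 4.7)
The plan is to handle permutation invariance and injectiveness separately, as they require very different arguments. Permutation invariance is immediate: the encoder output is a sum indexed by elements of $\mathcal{N}_i$, and since addition is commutative and associative, for any permutation $\rho$ on the indices we have $\sum_{j\in\mathcal{N}_i}\func[MLP]{\bm{m}_{ij}}=\sum_{j\in\mathcal{N}_i}\func[MLP]{\bm{m}_{i\rho(j)}}$. This discharges Equation \ref{eq: perm} regardless of the choice of MLP weights.

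For injectiveness (Equation \ref{eq: inj}), I would lean on the now-standard argument from the Deep Sets / GIN line of work. The critical structural facts here are that (i) the set $\mathcal{N}_i$ has bounded cardinality $\lvert\mathcal{N}_i\rvert\leq n-1$, and (ii) messages $\bm{m}_{ij}$, being floating-point vectors in $\R[l]$ produced by earlier neural layers, take values in a countable (indeed finite, under fixed precision) subset $\mathcal{M}\subset\R[l]$. The plan is to enumerate $\mathcal{M}$ as $\bm{x}_1,\bm{x}_2,\ldots$ and exhibit an explicit injective map $f:\mathcal{M}\to\R$ (for example $f(\bm{x}_k)=N^{-k}$ with $N>n-1$) so that $\sum_{\bm{m}\in c_i}f(\bm{m})$ is a base-$N$ expansion whose digits recover the multiset $c_i$ unambiguously; multisets are what we need, not sets, since two distinct senders could in principle emit identical messages, and summation naturally preserves multiplicity. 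Composing with a decoding scalar map and lifting by one extra coordinate to absorb distinct cardinalities yields an injective $\Phi$ on the space of messages received.

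The remaining step is to realize the composite mapping $\bm{m}\mapsto f(\bm{m})$ (or an appropriate $\R[l]\to\R[d']$ analog that preserves injectiveness after summing) with a Multi-Layer Perceptron. This is where I would invoke the classical universal approximation theorem: since $\mathcal{M}$ is a finite set of points in $\R[l]$, any function defined on it can be realized (exactly on those points, by interpolation) by a sufficiently wide MLP with standard nonlinearities. Thus there exists a choice of weights for the MLP in Equation \ref{eq: how gnn} such that $\Phi(c_i)=\Phi(c_i')$ forces $c_i=c_i'$, giving the existence claim.

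The main obstacle is bookkeeping around the codomain: to keep the argument clean one must specify whether messages are treated as a set or a multiset, and one must ensure the chosen $N$ exceeds every realizable ``digit'' so that no carries contaminate the decoding. I would handle both by restricting $f$ to take values in a sufficiently small interval and absorbing the cardinality via an explicit extra feature (or by padding $\bm{m}_{ij}$ with an indicator of ``message present''), so that both the contents and the size of $\mathcal{N}_i$ are recoverable from $\Phi(c_i)$. Once those details are settled, the proof reduces to citing Deep Sets-style injectiveness plus MLP universal approximation.
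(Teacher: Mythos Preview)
Your proposal is correct and follows essentially the same approach as the paper: permutation invariance via commutativity of addition, injectiveness via the Deep Sets / GIN construction on a countable message space (the paper simply cites Lemma 5 of \cite{gin} where you spell out the base-$N$ digit argument), and realization of the required $f$ by an MLP via universal approximation. Your additional care about multiset versus set and about recovering cardinality goes slightly beyond what the paper makes explicit, but the core route is the same.
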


\begin{proof}
    Please see Appendix~\ref{appendix: proof}.
  \end{proof}

\paragraph{Relation with GNNs} Our encoder can be seen as a single-layer Graph Neural Network tailored for real-life communication stochasticity. 
Correspondingly, ``sum of MLPs'' of {\coolname}'s Equation \ref{eq: how gnn} is the GNN aggregation function, 
which preserves the theoretical properties while being simple to implement in practice. 
Communication design of many related works can also be related with GNNs. 
For example, 
TarMAC \citep{tarmac} performing ``weighted mean'' of messages can be seen as a single-layer Graph Attention Network \citep{gat}; IC3Net \citep{ic3net} performing ``unweighted average'' is equivalent to GraphSAGE \citep{graphsage}. 
Other related works (\eg, MAGIC \citep{magic}) deploy multi-layer GNNs that require multi-hop communication in each step. Such communication can be too expensive under constraints of realistic wireless networks, and thus we restrict {\coolname} to the single-layer design. 
Importantly, for all the above related works, their GNN encoders satisfy permutation invariance but \emph{not injectiveness}. As a result, there can be significant performance drop due to the aforementioned stochasticity.

\subsection{General Framework}
\label{sec: method complete}

We propose a \textbf{\emph{general}} framework to learn practical communication strategies, since our techniques can be applied to various MARL training algorithms in a plug-and-play fashion. 

\vspace{.3cm}
\noindent
\begin{minipage}{0.58\linewidth}
\begin{algorithm2e}[H]
\DontPrintSemicolon
\caption{Example off-policy {\coolname}}
\label{algo: qmix-wink}
Init. all neural networks and replay buffer $\mathcal{D}$\;
\For{episode$=1$ to $M$}{
    Init. observation $\bm{o}$; init. messages $\bm{c}$ to all 0\;
    \For{$t=1$ to end of episode}{
        Receive msg $c_{i}^{t-1}$; get network measurement $\bm{o}_{i}^{\mathcal{C},t}$\;
        $\phi_{i}^{t}\gets$ Encode message by $\Phi_i\paren{c_i^{t-1}}$\;
        $a_{i}^t, m_{i}^t, Q_{i}^t\gets$ $\epsilon$-greedy $\arg\max_{a_i}Q^*_i\paren{o_i^t, \phi_i^t, a_i}$\;
        Take action $\bm{a}^{\mathcal{T},t}$ in game environment\;
        Send $\bm{\phi}^t$ by $\bm{a}^{\mathcal{C},t}$ in wireless environment\;
        Observe reward $r^t$; transit to state $\bm{s}^{t+1}$\;
        Add transition to replay buffer $\mathcal{D}$\;
    }
    Randomly sample trajectory from $\mathcal{D}$\;
    Update $\theta_{\Phi}$, $\theta_{Q}, \theta_\text{mix}, \theta^{'}_{Q}, \theta^{'}_\text{mix}$\;
}
\end{algorithm2e}
\end{minipage}
\begin{minipage}{0.38\linewidth}
    \tikzset{cross/.style={cross out, draw=black, minimum size=2*(#1-\pgflinewidth), inner sep=0pt, outer sep=0pt},
cross/.default={1pt}}
\begin{tikzpicture}[
    >={Stealth[inset=0pt,length=4pt,angle'=45,round]},
    env_block/.style={ellipse,draw=color_env_outer,fill=color_env_inner},
    nn_block/.style={draw=color_nn_outer,fill=color_nn_inner},
    scale=0.8, every node/.style={transform shape}
]
    \def\xNNsep{2.8cm}
    \def\yNNsep{2.2cm}
    \def\widthNN{2.1cm}
    \foreach \i/\opa\opb in {1/0.3/1,2/1/1,3/0.3/0.3}
    {
        \node[nn_block,minimum width=\widthNN] (nn q \i) at ($(\i*\xNNsep,0)$) {Q Network};
        \node[above=.3cm of nn q \i] (Q \i) {$Q_\i^t$};
        \node[left=.2cm of Q \i] (a \i) {$a_\i^t$};
        \node[right=.2cm of Q \i] (m \i) {$m_\i^t$};
        \node[below=.3cm of nn q \i,xshift=.4cm] (phi \i) {$\phi_{\i}^t$};
        \node[below=.3cm of nn q \i,xshift=-.4cm] (o \i) {$o_\i^t$};
        \draw[->] let
            \p1=(nn q \i.north),\p2=(a \i.south)
          in (\x2,\y1) -- (\x2,\y2);
        \draw[->] let
            \p1=(nn q \i.north),\p2=(m \i.south)
          in (\x2,\y1) -- (\x2,\y2);
        \draw[->] let
            \p1=(nn q \i.north),\p2=(Q \i.south)
          in (\x2,\y1) -- (\x2,\y2);
        \draw[->] let
            \p1=(phi \i.north), \p2=(nn q \i.south)
          in (\x1,\y1) -- (\x1,\y2);
        \draw[->] let
            \p1=(o \i.north), \p2=(nn q \i.south)
          in (\x1,\y1) -- (\x1,\y2);
        
        \node[nn_block,below=2cm of nn q \i.south,xshift=.6cm,opacity=\opa] (enc mlp1\i) {\text{\fontfamily{lmtt}\selectfont MLP}\textsubscript{$\bm{\theta}$}};
        \node[nn_block,below=2cm of nn q \i.south,xshift=-.6cm,opacity=\opb] (enc mlp2\i) {\text{\fontfamily{lmtt}\selectfont MLP}\textsubscript{$\bm{\theta}$}};
        \node[below=.3cm of phi \i.south,circle,draw,inner sep=.5mm] (sum \i) {$\mathlarger{\Sigma}$};
        \draw[->,opacity=\opa] (enc mlp1\i.north) -- (sum \i.south);
        \draw[->,opacity=\opb] (enc mlp2\i.north) -- (sum \i.south);
        \draw[->] (sum \i) -- (phi \i);
        \node[above=.25cm of enc mlp2\i] (text enc) {Enc.};
        \begin{pgfonlayer}{background}
            \node[inner sep=0pt, fit=(sum \i) (enc mlp1\i) (enc mlp2\i),fill=white,opacity=0.4] (nn encoder) {};
        \end{pgfonlayer}
        
        \node[below=\yNNsep of nn encoder.south,yshift=.8cm] (msg \i) {$m_\i^{t-1}$};
    }
    \foreach \from/\to/\tomlp/\isdash in {1/2/2/0, 1/3/2/1, 2/1/2/0, 2/3/1/1, 3/1/1/1, 3/2/1/0}
    {
        \ifthenelse{\isdash = 0}
        {
            \draw[->] (msg \from.north) -- (enc mlp\tomlp\to.south);
        }
        {
            \draw[->,dashed] (msg \from.north) -- (enc mlp\tomlp\to.south);
        }
    }
    
    \node[nn_block,above=\yNNsep of nn q 2,yshift=.0cm,anchor=center] (nn qmix) {Mixing Network};
    \foreach \i/\offx in {1/-.4,2/0,3/.4} {
        \draw[->] let
            \p1=(Q \i.north),\p2=($(nn qmix.south)+(\offx,0)$)
          in (\x1,\y1) -- (\x2,\y2);
    }
    \node[above=.3cm of nn qmix] (Q tot) {$Q_\text{total}^t$};
    \draw[->] (nn qmix.north) -- (Q tot.south);
    
                
\end{tikzpicture}
	
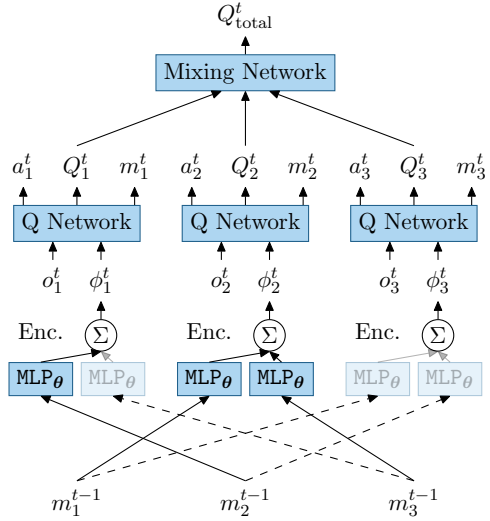
\captionof{figure}{Example architecture of off-policy {\coolname}}
	\label{fig: method eg framework}
\end{minipage}
\vspace{.3cm}

For illustration, we describe an example implementation based on value function factorization \citep{vdn,qmix}, which can be trained in the \textbf{\emph{off-policy}} manner. 
As shown in Figure \ref{fig: method eg framework},
Three neural networks interact with each other:
\emph{message encoder}, \emph{$Q$ network} and  \emph{mixing network}.
The message encoder follows the Section \ref{sec: method how} design. 
The $Q$ network estimates the $Q$ value for each individual agent.
The mixing network combines each agent's $Q$ value to calculate a global $Q$ for all agents. 
This design follows the ``centralized training, decentralized execution'' paradigm \citep{ctde}. 
During centralized training, the mixing network optimizes the policy via maximizing the global $Q$ value. 
During decentralized execution, each agent outputs its next action with its own $Q$ network alone, without the mixing network. 
The inputs to $i$'s $Q$ network include $i$'s current augmented observation $o_i^t$ and the encoded message vector $\phi_i^t$. 
The output is the augmented action $a_i^t$ and the $Q$ value of $\bm{a}_i^t$. 
The inputs to the mixing network are the output $Q$ of each $Q$ network. The output is the global $Q$ value for the current step $t$. 

Algorithm \ref{algo: qmix-wink} shows the training algorithm. 
Since we use GRU \citep{gru} in $Q$ network, we perform gradient update at the end of rollout episode instead of every step (as in \cite{rmaddpg}). 
In line 13, we perform gradient descent from the loss  
\begin{align}
	\label{eq: loss}
	\mathcal{L}\paren{\theta} = \sum_{t}\left[(y_t^{\text{tot}}-Q_{\text{tot}}\paren{\bm{o},\bm{c}, \bm{a};\theta})^{2} \right]
\end{align}
\noindent 
where $y_t^{\text{tot}}=r+\gamma\max_{\bm{a}'}Q_\text{tot}\paren{\bm{o}', \bm{c}', \bm{a}';\theta'}$, $\bm{c}$ is the received message. $Q_\text{tot}$ is the full architecture in Figure \ref{fig: method eg framework}
with parameter $\theta$.
$\paren{\bm{o}, \bm{c}, \bm{a}, r, \bm{o}', \bm{c}'}$ is transition-$t$ in the sampled trajectory.

We can similarly integrate \textbf{\emph{on-policy}} algorithms into our framework. 
In Section \ref{sec: exp}, we evaluate REINFORCE \citep{reinforce} based algorithms. 
The gradient function is 
$\nabla_\theta J(\theta) = \sum_{i}\sum_{t} \left[ \nabla_\theta \log\pi_\theta(a_i|o_i, c_i) G_t  \right]$,
where $G_t$ is the sampled returns. 
Other variants (\eg, based on Actor-Critic) can also be derived similarly. 
We omit the details for brevity.

\section{Experiments}
\label{sec: exp}
\subsection{Setup}

\parag{Wireless Environment}
\label{sec: env wifi setup}
Following most existing works \citep{dial,nips16_commnet,schednet}, 
we let agents perform single-hop broadcast.
We implement a 1-hop mobile network without Access Points (AP) as in \cite{planning}.
For communication model, we follow ``log distance path loss''. We model interference as the receiver hearing multiple signals in range. We also consider background noise and attenuation due to obstacles. For communication protocol, we implement the slotted $p$-CSMA \citep{gaiyi}.
Agents following the protocol perform fully distributed execution without the need of a centralized controller. See Appendix \ref{appendix: communication protocols} for details.

\paragraph{MARL Algorithms \& Hyperparameters}
We evaluate both on- and off-policy designs.

\subparag{Off-policy}{}
We compare three designs. 
The first is QMix \cite{qmix}, the state-of-the-art value decomposition based MARL training algorithm. 
The second enhances the communication part of the first design, by integrating the TarMAC message aggregation function \citep{tarmac} into the original QMix training. 
The third is the {\coolname} version of QMix, whose implementation follows Section \ref{sec: method complete} and Algorithm \ref{algo: qmix-wink}. 
Note that TarMAC is the state-of-the-art design performing attention-based message aggregation. 

\subparag{On-policy}{}
We again compare three designs.
Among them, CommNet \citep{nips16_commnet} and IC3Net \citep{ic3net} are two state-of-the-art MARL algorithms with learned communication schemes. 
The {\coolname} variant of IC3Net follows the description at the end of Section \ref{sec: method complete}. 
All three are trained with REINFORCE \citep{reinforce}. 

We conduct most experiments with off-policy models due to its significantly higher sample efficiency and faster convergence time.
See Appendix \ref{appendix: training hyperparameters} for training hyperparameters.

\subsection{Predator Prey}
\label{sec: exp pp}
``Predator-Prey'' is a standard MARL benchmark \citep{schednet,ic3net,vbc}. 
In this game, $m$ predators and $1$ prey are initially randomly placed in an $n\times n$ grid world (denoted as $\text{PP}_{n}$). 
In one step, each predator can take one of the five possible \emph{game actions}: moving to an adjacent grid or staying still. 
\emph{Reward} is given when a predator catches the prey by moving to the prey's position.
The game \emph{terminates} when all predators catch the prey. 
For \emph{observation}, each predator has vision $v$.

We further propose variants of the PP environment to better simulate realistic applications (\eg, SAR with complex field terrain). 
In the vanilla setting used by \cite{schednet,nips16_commnet,ic3net}, the grid world is flat and does not contain any obstacles. 
In the advanced setting, we introduce $k$ obstacles. 
An obstacle is specified by size $\ell$ and can be either horizontal or vertical.  
Obstacles affect both the game and wireless environments, since they 
block movement of agents and also have an attenuation effect on the wireless signals passing them.
To initialize each episode, we randomly generate the obstacle orientation and positions. 
We denote such an environment as \ppw{n}.
For all experiments, we set grid size $n=10$. There are 3 agents, each with vision $v=0$ (\ie, similar to \cite{ic3net}, agents only see the grid it is currently in), number of obstacles $k=1$ and obstacle size $\ell=9$. 
Detailed parameter settings of wireless environment can be found in Appendix \ref{appendix: wireless parameters}.

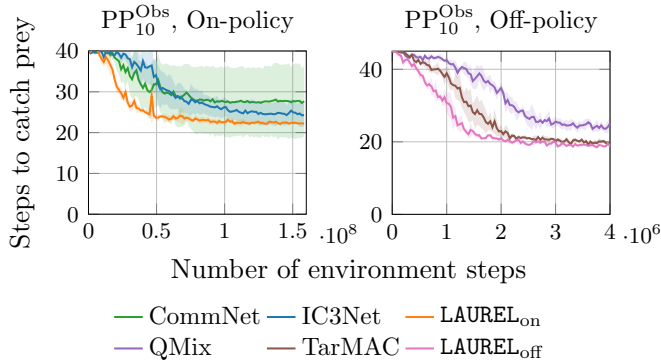
\begin{figure}[!htb]
    \centering
    \begin{tikzpicture}[
    scale=0.95, every node/.style={transform shape}
]

\def\colIc{color_ic3net_}
\def\colCommnet{color_commnet_}
\def\colQmix{color_qmix_}
\def\colProposedre{color_laurel_on}
\def\colProposedac{color_laurel_off}
\def\colTarmac{color_tarmac_}
\begin{groupplot}[
    group style={
        group size=2 by 1,
        horizontal sep=12mm,
        vertical sep=12mm
    },
    scale only axis,
    height=0.15\textwidth,
    width=0.2\textwidth,
    tick label style={font=\footnotesize},
    title style={font=\small,at={(axis description cs:0.5, 0.95)}},
    grid,
    y label style={at={(axis description cs:0.2,.5)},anchor=south},
    every x tick scale label/.style={
        at={(xticklabel* cs:1.15,0cm)},
        anchor=near xticklabel
    },
]
\nextgroupplot[
    title={\ppw{10}, On-policy},
    ymin=0,ymax=40,xmin=0,xmax=160000000,
    ylabel={Steps to catch prey},
]

\errorband[thick,color=\colCommnet]{\dataSotaStepsA}{0}{1}{2}{CommNet}
\errorband[thick,color=\colIc]{\dataSotaStepsA}{0}{3}{4}{IC3Net}
\errorband[thick,color=\colProposedre]{\dataSotaStepsA}{0}{5}{6}{\proposedre}
\legend{}

\nextgroupplot[
    title={\ppw{10}, Off-policy},
    ymin=0,ymax=45,xmin=0,xmax=4000000,
    xlabel={Number of environment steps},
    x label style={at={(axis description cs:-.2,-.25)},anchor=south},
    legend cell align=left,
    legend style={at={(0.75,-0.5)},draw=none,font=\small},
    legend columns=3
]
\errorband[thick,color=\colCommnet]{\dataDummy}{0}{1}{2}{CommNet}    
\errorband[thick,color=\colIc]{\dataDummy}{0}{1}{2}{IC3Net}   
\errorband[thick,color=\colProposedre]{\dataDummy}{0}{3}{4}{\proposedre} 
\errorband[thick,color=\colQmix]{\dataSotaPpTarmac}{0}{1}{2}{QMix}
\errorband[thick,color=\colTarmac]{\dataSotaPpTarmac}{0}{3}{4}{TarMAC}
\errorband[thick,color=\colProposedac]{\dataSotaPpTarmac}{0}{5}{6}{\proposedac}

\end{groupplot}

\end{tikzpicture}
    \vspace{-.25cm}
    \caption{Comparison with state-of-the-art methods}
    \label{fig: sota steps}
\end{figure}

\paragraph{Comparison with State-of-the-Art}
{\coolname} uses RSS as the wireless observation. 
From Figure \ref{fig: sota steps}, we observe that for both the on-policy and off-policy algorithms, {\coolname} significantly shortens the number of steps to catch the prey. In addition, the variances of the {\coolname} curves are very small. 
For CommNet, the model hard-codes an all-to-all communication scheme: each step, all agents broadcast messages. 
In a realistic wireless network environment, sending more messages can reduce the number of successfully received messages due to increased chance of collision and interference. 
Therefore, it can be hard for the algorithm always performing broadcasting to stably learn a good policy -- as reflected by the large variance of the CommNet curve. 
For IC3Net, the performance is better than CommNet since IC3Net has gated communication which mutes  unimportant messages. 
{\coolname} further improves upon IC3Net due to its intelligence in all the ``when'', ``what'' and ``how'' aspects. 
For the off-policy comparisons, QMix converges to a policy with significantly more steps. This shows the importance of communication. 
For {\coolname} and TarMAC, both converge to similar number of steps. However, {\coolname} converges faster and with smaller variance. 
Note that in the Lumberjacks environment (Section \ref{sec: exp lj}), TarMAC fails to learn a good policy while {\coolname} can.

\begin{figure}[!htb]
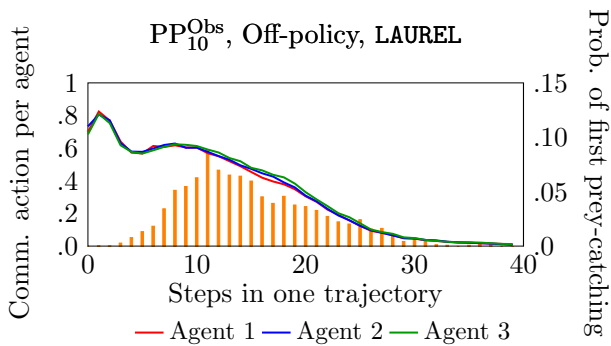

    \centering
    \pgfplotstableread[col sep=comma]{
0.0,0.6995,0.45847546281125834,0.7335,0.44212865774568194,0.6845,0.4647146974219786,0.0
1.0,0.8245,0.3803942034258698,0.8055,0.3958152978347431,0.81,0.3923009049186594,0.0
2.0,0.769,0.42147241902643934,0.771,0.42018924307982314,0.7535,0.4309730269982055,0.0
3.0,0.6405,0.47985388401054463,0.6305,0.4826694003145448,0.6175,0.48599768517967956,0.0
4.0,0.5725,0.4947158275212163,0.579,0.49371955602345224,0.577,0.4940354238311182,0.0
5.0,0.5675,0.4954227992331458,0.5775,0.493957234991051,0.567,0.4954906659060347,0.0
6.0,0.6135,0.48694737908730434,0.598,0.4903019477832021,0.587,0.492372826220129,0.0
7.0,0.6075,0.48830702431974404,0.6195,0.4855097836295304,0.608,0.4881966816765523,0.0
8.0,0.6175,0.4859976851796796,0.6285,0.48320570153920034,0.624,0.48438001610305875,0.0
9.0,0.604,0.4890644129355555,0.6025,0.48938098655343276,0.6205,0.48526255779732347,0.0
10.0,0.6035,0.4891704713083068,0.603,0.4892759957324638,0.613,0.4870636508712128,0.0
11.0,0.569,0.49521611443894126,0.5775,0.49395723499105104,0.5915,0.491556456574413,0.0
12.0,0.5505,0.49744321283942483,0.5515,0.4973406780065405,0.5735,0.4945682460490181,0.0
13.0,0.52,0.49959983987186446,0.529,0.49915829152683877,0.5395,0.4984373079936843,0.0
14.0,0.4895,0.4998897378422605,0.495,0.49997499937496426,0.52,0.4995998398718645,0.0
15.0,0.454,0.497879503494571,0.473,0.49927046778274503,0.4795,0.4995795732413483,0.0
16.0,0.419,0.49339537898119873,0.446,0.4970754469896888,0.464,0.49870231601628906,0.0
17.0,0.396,0.48906441293555547,0.426,0.4944936804449518,0.437,0.4960151207372534,0.0
18.0,0.3785,0.48501314415179175,0.39,0.48774993593028887,0.421,0.4937195560234521,0.0
19.0,0.3485,0.4764952780458615,0.3595,0.4798538840105446,0.383,0.4861182983595639,0.0
20.0,0.3045,0.46019533895945114,0.3085,0.4618741711765268,0.33,0.47021271782035096,0.0
21.0,0.271,0.444476096095166,0.274,0.44600896851969957,0.286,0.45188936699152016,0.0
22.0,0.2235,0.41659062639477806,0.228,0.4195426080864742,0.245,0.4300872004605528,0.0
23.0,0.191,0.39308904843559206,0.1875,0.3903123748998999,0.2015,0.40112061776977026,0.0
24.0,0.1575,0.36427153333743306,0.1585,0.3652091866314462,0.177,0.38166870450693346,0.0
25.0,0.124,0.3295815528818369,0.124,0.3295815528818369,0.1395,0.3464675309462678,0.0
26.0,0.0935,0.29113184298527195,0.0975,0.2966374049239196,0.1015,0.3019896521405944,0.0
27.0,0.081,0.2728351150420381,0.082,0.27436472076416973,0.086,0.28036404905051604,0.0
28.0,0.067,0.2500219990320915,0.065,0.2465258607124251,0.075,0.2633913438213165,0.0
29.0,0.0475,0.21270578271406063,0.0495,0.21690954335851453,0.0535,0.22502833154961246,0.0
30.0,0.0455,0.20839805661282315,0.047,0.21163884331567895,0.049,0.21586801523153235,0.0
31.0,0.0355,0.1850398605706378,0.0375,0.18998355191963628,0.038,0.19119623427254026,0.0
32.0,0.032,0.17600000000000207,0.035,0.18377975949488906,0.0315,0.17466467874187178,0.0
33.0,0.026,0.1591351626762578,0.0265,0.1606167799453133,0.0265,0.1606167799453127,0.0
34.0,0.0205,0.14170303454760644,0.024,0.15304901175767266,0.023,0.14990330216509642,0.0
35.0,0.0215,0.14504395885385973,0.0205,0.1417030345476061,0.023,0.1499033021650964,0.0
36.0,0.0175,0.1311249404194359,0.018,0.13295111883696015,0.0215,0.14504395885386004,0.0
37.0,0.0155,0.12353036064061435,0.0155,0.12353036064061432,0.0175,0.1311249404194366,0.0
38.0,0.0135,0.11540255629751213,0.0115,0.10661965109678447,0.0135,0.11540255629751374,0.0
39.0,0.0085,0.09180277773575347,0.0085,0.09180277773575342,0.0125,0.11110243021644435,0.0
}\dataTrajComm
    \vspace{-.3cm}
    \caption{Average communication action in one trajectory}
    \label{fig: exp when traj}
\end{figure}

\paragraph{Learned Communication Strategy}
We analyze the communication scheme learned by our framework. 
We first answer ``\emph{at which steps of the trajectory the agents are more likely to send messages}''. 
In Figure \ref{fig: exp when traj}, we use the same configuration of {\coolname} as Figure 3. 
Once the training converges, we freeze the model and evaluate it on 2000 trajectories. 
We record two metrics: 
(1) the binary communication action $a_i^t$ of each agent $i$ at each step $t$, and 
(2) the first step $t'$ where at least one predator catches the prey. 
The left vertical axis (corresponding to the curves) of Figure \ref{fig: exp when traj} records the average $a_i^t$ over the 2000 trajectories. 
The right vertical axis (corresponding to the bars)  records the probability of $t'$. 
We have the following observations:
(1) The three curves almost overlap with each other since agents are homogeneous. 
(2) At the initial few steps, the agents are more likely to communicate. This allows agents to know others' initial positions, as well as to probe the wireless and game environments. The sooner they know such information, the better they collaborate. 
(3) The agents are also more likely to communicate when they catch the prey (as can be seen from the similar shape of the curves and the bars after step 10). After the prey-catching agent informs others of the prey's position, the other agents can directly approach the prey following the shortest path.

\begin{figure}[!htb]
    \centering
    \input{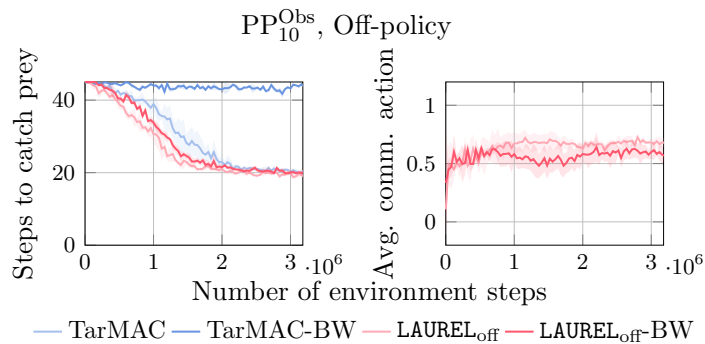}
    \caption{Communication adapted to limited bandwidth}
    \label{fig: exp when conv}
\end{figure}

\paragraph{Adapting to Complicated Wireless Environment}
We further study the effect of bandwidth limit on agents' communication behaviors. In Figure \ref{fig: exp when conv}, we consider two wireless environments: one with more bandwidth resources and the other with fewer bandwidth resources (achieved by reducing the number of time slots in slotted $p$-CSMA \citep{gaiyi}). All other parameters remain as default. 
The curves marked by ``-BW'' correspond to those evaluated in the bandwidth reduced environment. 
We observe the different behaviors of TarMAC and {\coolname}. In the right plot, we plot the average communication action $\alpha$. \eg, when $\alpha=0.5$, each agent has 50\% probably of sending a message at each step. 
Since TarMAC agents always send a message in each step ($\alpha=1$), we do not plot their communication curves. 
We observe that: 
(1) The performance of TarMAC is very sensitive to the wireless environment. When the wireless environment becomes more complicated, the total number of steps to catch the prey increases significantly. 
(2) For {\coolname}, making the wireless environment more complicated only results in slightly slower convergence. From the right plot, we observe that {\coolname} successfully adapts its policy to the changed environment by reducing the number of communications.

\setlength{\columnsep}{8pt}%
\begin{wraptable}{r}{7cm}
\centering
\vspace{-8pt}
\caption{Comparison on number of steps}
\vspace{-.2cm}
\resizebox{0.45\textwidth}{!}{
\begin{tabular}{lccc}
    \toprule
    Enc. & MLP & Avg & Eq. \ref{eq: how gnn} \\
    \midrule
    \midrule
    \ppw{10} & 24.03\std{5.12} & 21.24\std{0.98} & 18.71\std{0.58}\\

    \bottomrule
\end{tabular}

}
\label{tab: exp ablation enc}
\vspace{-.1cm}
\end{wraptable}

\parag{Ablation Study}
We show how our encoder improves the quality of the learned policy. 
In Table \ref{tab: exp ablation enc}, we compare three different encoding architectures, with all other configurations equivalent. 
For the MLP encoder, suppose each message is a length $d$ vector. 
For 3 agents, the input $\bm{e}$ is a length-$3d$ vector. 
The subvector $\left[\bm{e}\right]_{i d:\paren{i+1} d}$ is filled with $i$'s message if message from agent $i$ is received, otherwise, 0s. 
Then $\bm{e}$ is fed into a 2-layer MLP to generate the encoded vector. 
For the ``Avg'' encoder, we first aggregate the received raw messages by vector mean, and then feed the aggregated vector into a 2-layer MLP. 
Clearly, the proposed encoder based on Equation \ref{eq: how gnn} leads to significantly better agents' performance.

\subsection{Lumberjacks}
\label{sec: exp lj}
Lumberjacks \citep{lj_paper} is a multi-resource spatial coverage problem \citep{lumber_liuyan}. In a grid world, $p$ lumberjacks cooperate to chop $q$ trees. Each tree has a ``strength'' $k$, meaning the tree will be chopped when at least $k$ agents are adjacent to it. We set $k=2$ for all trees. 
The agent obtains reward $r_1=0.05$ and $r_2=0.5$ for observing and chopping the tree. 
Step penalty $r_3=-0.1$ is used to encourage agents to chop trees within minimum number of steps. 
Similar to PP, each lumberjack agent can move to its adjacent grid or stay still in a step. 
We modify the original setting \citep{lj_code} by assigning trees signal attenuation $\delta=4.5$. The game is terminated once all trees are chopped or the max number of steps is reached. 
Denote {\lj{n}{p}{q}} as a Lumberjacks game within $n\times n$ grid.

\begin{figure}[!htb]
    \centering
    \input{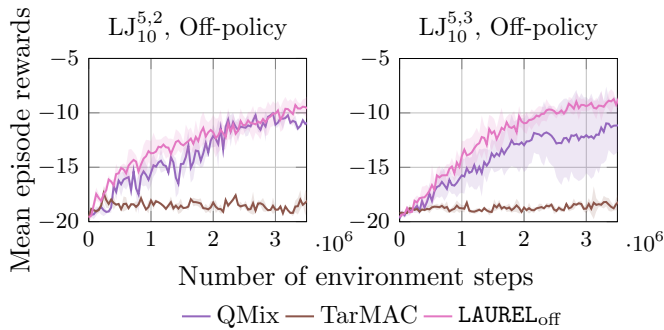}
    \vspace{-.3cm}
    \caption{Comparison with state-of-the-art methods}
    \label{fig: sota lj steps}
\end{figure}

Figure \ref{fig: sota lj steps} compares the convergence of reward between {\coolname} and state-of-the-art methods. 
Both environments contain 5 agents. 
In \lj{10}{5}{2} with 5 agents and 2 trees, QMix achieves similar reward as {\coolname}. 
In \lj{10}{5}{3}, {\coolname} achieves significantly higher reward than QMix. 
In both environments, TarMAC fails to learn a good policy. We conclude the following. 
When there are many agents and few trees, collaboration is less critical -- Even if each agent searches for a tree on its own and then waits there, there is still a high chance that two agents land adjacent to the same tree by coincidence. 
This is why the vanilla QMix performs well in \lj{10}{5}{2}. 
However, when we increase the number of trees in the environment, collaboration becomes more critical, and the communication strategy becomes important. 
So we see a performance drop in QMix in contrast to a performance boost in {\coolname}. 
Finally, the poor performance of TarMAC shows it is important to consider the complicated wireless environment in the algorithm design. 
Otherwise, \emph{communication can be even more harmful than no communication at all}, as shown by the TarMAC-QMix comparison.

\section{Conclusion}

We have proposed a general framework to learn practical multi-agent communication strategies. 
Our techniques comprehensively address the fundamental aspects of communication, ``when'', ``what'' and ``how'', with theoretical and empirical justifications. 
The two implementations of our framework  (on- / off-policy) significantly improve the agents' performance, especially in complicated environments.

\bibliography{citation}
\appendix
\appendix
\section{Log Distance Path Loss with Fading}
\label{appendix: pathloss}
\begin{align}
\label{eq: path loss model}
P_r = P_t - K_\text{ref} - 10\eta\log_{10}\frac{d}{d_{0}}+\psi
\end{align}

where $P_{r}$ is the received power in dBm, $P_t$ is the transmission power in dBm, $K_\text{ref}$ is the loss at reference distance $d_{0}$, $\eta$ is the path loss component depending on the propagation medium, and $\psi$ is a log normal variable for multi-path fading. This equation describes the change of signal strength during the propagation.

\section{Proofs}
\label{appendix: proof}

\begin{proof}[Proof of Theorem \ref{thm: gin emb}]
To prove permutation invariance, note that vector addition is commutative. 
So for any permutation $\rho$, the sum of the sequence $\paren{\func[MLP]{\bm{m}_{i\rho\paren{1}}},\hdots,\func[MLP]{\bm{m}_{i\rho\paren{k}}}}$ is always the same. 

For injectiveness, we recall the conclusion from Lemma 5 of \cite{gin}. 

\begin{lemma}
Assume the space of messages, $\mathcal{M}$, is countable. 
There exists $f : \mathcal{M}\rightarrow \R[d']$, s.t. $\sum_{j\in\mathcal{N}_i} f\paren{\bm{m}_{ij}}$ is unique for each $c_i\subset \mathcal{M}$. 
\end{lemma}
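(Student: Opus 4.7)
The plan is to build the function $f$ explicitly using an enumeration of the countable message space, exploiting the fact that in our setting each multiset $c_i$ has bounded cardinality (at most $N = n-1$, where $n$ is the number of agents). Because $\mathcal{M}$ is countable, fix an injection $Z : \mathcal{M} \to \mathbb{N}$. The idea is to map each message $\bm{m}$ to a scalar whose binary/base-$b$ expansion encodes the identity of $\bm{m}$ in a single ``digit slot,'' and then use the additivity of the representation to recover the full multiset from the sum.

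Concretely, I would first take $d' = 1$ (the extension to higher dimensions is trivial by embedding in the first coordinate). Define
\begin{equation*}
    f(\bm{m}) \;=\; (N+1)^{-Z(\bm{m})}.
\end{equation*}
Given a multiset $c_i$ with multiplicities $k_{\bm{m}} \in \{0,1,\ldots,N\}$ for each $\bm{m} \in \mathcal{M}$ (only finitely many nonzero since $|c_i| \le N$), the sum becomes
\begin{equation*}
    \sum_{j \in \mathcal{N}_i} f(\bm{m}_{ij}) \;=\; \sum_{\bm{m} \in \mathcal{M}} k_{\bm{m}} \, (N+1)^{-Z(\bm{m})}.
\end{equation*}
The key step is to observe that this is precisely a base-$(N+1)$ positional expansion: each ``digit'' $k_{\bm{m}}$ lies in $\{0,\ldots,N\}$, which is strictly less than the base $N+1$, so there are no carries and the expansion is unique. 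Hence the map $c_i \mapsto \sum_{j} f(\bm{m}_{ij})$ is injective, which is exactly the claim that the sum is unique across multisets.

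The main obstacle, and the only place where the proof needs care, is justifying the uniqueness of the base-$(N+1)$ expansion when $Z$ enumerates a countably infinite set and the sum is a real number rather than a finite polynomial in $(N+1)^{-1}$. I would handle this by arguing that if two distinct multisets $c_i \neq c_i'$ yielded the same sum, then letting $\bm{m}^\star$ be the message with smallest $Z$-index where the multiplicities disagree, the difference of the two sums would be at least $(N+1)^{-Z(\bm{m}^\star)}$ minus the tail $\sum_{Z(\bm{m}) > Z(\bm{m}^\star)} N (N+1)^{-Z(\bm{m})}$, and a geometric-series bound shows this tail is strictly smaller than the leading term, yielding a contradiction. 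The remaining ingredients (permutation invariance of the sum, and countability being the only assumption needed on $\mathcal{M}$) are immediate.
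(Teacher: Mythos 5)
Your construction is the standard proof of this lemma: the paper itself does not prove the statement but simply invokes Lemma 5 of the GIN paper, whose proof is precisely your digit encoding $f(\bm{m})=(N+1)^{-Z(\bm{m})}$, so the route is the intended one. There is, however, one genuine flaw in the step you yourself flag as the delicate one. The tail bound you propose, $\sum_{Z(\bm{m})>Z(\bm{m}^{\star})} N\,(N+1)^{-Z(\bm{m})}$, is a full geometric series that sums to \emph{exactly} $(N+1)^{-Z(\bm{m}^{\star})}$, i.e.\ it equals the leading term rather than being strictly smaller (this is the base-$(N+1)$ analogue of $0.0\overline{9}=0.1$), so as written the argument only yields that the difference of the two sums is $\geq 0$, which does not give injectivity. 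The repair is immediate from a fact you already recorded: since $\lvert c_i\rvert,\lvert c_i'\rvert\leq N$, the total multiplicity carried by indices beyond $Z(\bm{m}^{\star})$ is at most $N$, so the tail of the difference is at most
\begin{equation*}
N\,(N+1)^{-(Z(\bm{m}^{\star})+1)}\;=\;\tfrac{N}{N+1}\,(N+1)^{-Z(\bm{m}^{\star})}\;<\;(N+1)^{-Z(\bm{m}^{\star})};
\end{equation*}
alternatively, note that only finitely many multiplicities are nonzero, so the relevant sum is a proper partial sum of the geometric series and hence strictly below its limit. With that one-line fix the argument is complete, and the remaining ingredients (permutation invariance, and lifting $f$ to $\mathbb{R}^{d'}$ via the first coordinate) are fine as you state them.
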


Then we only need to show an MLP can express the function $f$ specified by the above lemma. 
This is a direct consequence of the universal approximation theorem \citep{mlp_universal}. 
In other words, there exists a set of MLP parameters such that $\Phi\paren{c_i}=\sum_{j\in\mathcal{N}_i}\func[MLP]{\bm{m}_{ij}}=\sum_{j\in\mathcal{N}_i}f\paren{\bm{m}_{ij}}$, where $c_i=c_i'\Leftrightarrow \Phi\paren{c_i}=\Phi\paren{c_i'}$. 
So $\Phi$ is injective. 

Note that the requirement on the input space $\mathcal{M}$ being countable is automatically satisfied in realistic applications, where the message $\bm{m}$ is quantized to fixed bit widths. 
\end{proof}

\section{Details in Experiments}

\subsection{Communication Protocols}
\label{appendix: communication protocols}
\parag{Slotted $p$-CSMA} When an agent intends to transmit a packet, it first randomly choose a counter in the range of ``counter window size''. When counter is down to 0, it senses the medium.
If the medium is free (\ie, sensed signal strength lower than the threshold $\Theta_f$), the agent transmits with probability $p$.
Otherwise, it chooses a random counter again and repeats the process.

\parag{Arbitration of successful receiving} We use SINR (signal-to-interference-plus-noise ratio). The receiver agent will sense the arrived packet's signal strength as $\theta$. Denote $\theta'$ as the interfering signal strength at the agent if exist, $N$ as the background noise, $\Theta_{r}$ as the SINR threshold. If $\frac{\theta}{\theta' + N} < \Theta_{r}$, the packet cannot be decoded correctly.  
In other words, there are three cases for the receiving signal strength:
(1) If $\theta > \Theta_r\times (\theta' + N) $, then the agent successfully receives the packet;
(2) If $\Theta_f \leq \theta \leq \Theta_r \times (\theta' + N)$, then the agent knows there is a packet coming, but it cannot successfully receive the contents;
(3) If $\theta < \Theta_f$, then the agent doesn't even know there is a packet arrived.

\subsection{Parameters}
\label{appendix: wireless parameters}
\parag{Default settings used in Section \ref{sec: exp}} Obstacle attenuation $\att_0 = 4.5$; noise $N=-95$dBm; RSS threshold $\Theta_f=-78$dBm; SINR threshold $\Theta_r=15$ dB  for \ppw{n} and $20$dBm for others; contention probability in $p$-CSMA protocol $p = 0.3$; counter window size in $p$-CSMA protocol $W = 15$ time slots.

\parag{Training Hyperparameters}
\label{appendix: training hyperparameters}
We repeat all experiments 5 times with different random seeds.
We report the average performance and variance of the metrics. 
For the on-policy algorithms, we use a LSTM cell with hidden dimension 128 in the policy net. 
For the off-policy algorithms, we use a GRU cell with a 2-layer MLP of hidden dimension 128 as the $Q$-value network, and 3-layer MLP with hypernetwork \citep{hypernet} as the mixing network. 
For both versions of {\coolname}, we use a 2-layer MLP with hidden dimension 128 to implemented the message encoder according to Equation \ref{eq: how gnn}. 
All algorithms are trained with Adam optimizer \citep{adam} with learning rate 0.0005.

\subsection{More Complicated Environments}
\label{sec: exp complicated}

\noindent
\begin{minipage}[c]{\columnwidth}
\centering
    \captionof{table}{Comparison on number of steps in more complicated environment}
    \label{tab: exp complicated env}
        \begin{tabular}{lccc}
    \toprule
    Method & \ppw{10} & \ppwn{10}& \ppwna{10} \\
    \midrule
    \midrule
    IC3Net & 25.59\std{7.80} & 26.63\std{8.58} & 32.76\std{9.26}\\
    {\proposedre} & 20.51\std{0.62} & 20.95\std{0.80} & 21.09\std{0.85}\\
    \midrule
    Gain & 5.08 & 5.68 & 11.67\\
    \bottomrule
\end{tabular}

\end{minipage}

We further introduce more complexity into the environment to see if we can still achieve performance gain. 
The complexity of the three environment in Table \ref{tab: exp complicated env} increases from left to right.
In \ppw{n}, the wireless channel has a small noise level fixed across episodes. In \ppwn{10}, we introduce random background noises
$N \sim \textbf{U}(-95, -90)$
across episodes (within an episode, the noise is fixed). This is inspired by the fact that background noise at the same location can vary significantly throughout the day\citep{noise_24}. 
 In \ppwna{10}, we further introduce obstacles with randomly selected attenuation from the candidate set $\attset$. The motivation is that obstacle of different materials have different attenuation effect on the wireless signals passing it. Specifically, for \ppw{10} and \ppwn{10}, we set the attenuation (unit: dB) $\attset=\{4.5\}, \attset=\{2.5\}$ respectively. For \ppwna{10}, $\attset=\{2.5, 3.5, 4.5\}$.

For all three environments, our {\proposedre} achieves the best game performance, since the predators can catch the prey in significantly less number of steps. This means agents are able to understand and adapt its communication to the wireless network condition.
The performance gain is higher when the wireless environment is more complicated. 
In real-life applications, the environments are even more complicated. 
In such cases, we expect it may be necessary to further generalize our framework following ideas in Section \ref{sec: method complete}.

\end{document}